\newcommand{\argmax}{\operatornamewithlimits{\arg \max}}
\newcommand{\argmin}{\operatornamewithlimits{\arg \min}}
\newcommand{\rset}{{\textrm{set}}}
\newcommand{\Var}{{\textrm{Var}}}
\newcommand{\bc}{{\mathbf{c}}}
\newcommand{\bp}{{\mathbf{p}}}
\newcommand{\bw}{{\mathbf{w}}}
\newcommand{\bx}{{\mathbf{x}}}
\newcommand{\bxbar}{{\bar{\bx}}}
\newcommand{\bX}{{\mathbf{X}}}
\newcommand{\bXtilde}{{\tilde{\bX}}}
\newcommand{\bXbar}{{\bar{\bX}}}
\newcommand{\by}{{\mathbf{y}}}
\newcommand{\bybar}{{\bar{\by}}}
\newcommand{\bY}{{\mathbf{Y}}}
\newcommand{\bYtilde}{{\tilde{\bY}}}
\newcommand{\bYbar}{{\bar{\bY}}}
\newcommand{\bmu}{{\boldsymbol \mu}}
\newcommand{\bbE}{{\mathbb{E}}}
\newcommand{\bbR}{{\mathbb{R}}}
\newcommand{\bbZ}{{\mathbb{Z}}}
\newcommand{\calN}{{\mathcal{N}}}
\newcommand{\calO}{{\mathcal{O}}}
\newcommand{\calP}{{\mathcal{P}}}
\newcommand{\calX}{{\mathcal{X}}}
\newcommand{\calXset}{{\calX_{\rset}}}
\newcommand{\sI}{{\mathsf{I}}}
\newcommand{\sK}{{\mathsf{K}}}
\newcommand{\sKtilde}{{\tilde{\sK}}}
\newcommand{\frakX}{{\mathfrak{X}}}
\newcommand{\binommL}{{\binom{m}{L}}}
\newcommand{\kset}{{k_{\rset}}}
\newcommand{\ksettilde}{{\tilde{k}_{\rset}}}
\newcommand{\secref}[1]{Section~\ref{#1}}
\newcommand{\figref}[1]{Fig.~\ref{#1}}
\newcommand{\tabref}[1]{Table~\ref{#1}}
\newcommand{\algref}[1]{Algorithm~\ref{#1}}
\newcommand{\corref}[1]{Corollary~\ref{#1}}
\newcommand{\thmref}[1]{Theorem~\ref{#1}}
\newcommand{\lemref}[1]{Lemma~\ref{#1}}
\newtheorem{theorem}{Theorem}
\newtheorem{lemma}{Lemma}
\newtheorem{property}{Property}
\newtheorem{corollary}{Corollary}
\begin{document}

\title{Bayesian Optimization with Approximate Set Kernels}

\author[1]{Jungtaek Kim}
\author[2]{Michael McCourt}
\author[1]{Tackgeun You}
\author[3]{Saehoon Kim}
\author[4]{Seungjin Choi}

\affil[1]{{\normalsize POSTECH, Pohang, Republic of Korea}}
\affil[2]{{\normalsize SigOpt, San Francisco, USA}}
\affil[3]{{\normalsize Kakao Brain, Seongnam, Republic of Korea}}
\affil[4]{{\normalsize BARO AI, Seoul, Republic of Korea}}
\affil[ ]{{\normalsize \texttt{jtkim@postech.ac.kr}}}

\maketitle

\begin{abstract}
We propose a practical Bayesian optimization method over sets, 
to minimize a black-box function that takes a set as a single input.
Because set inputs are permutation-invariant, 
traditional Gaussian process-based Bayesian 
optimization strategies which assume vector inputs can fall short.
To address this, we develop a Bayesian optimization method 
with \emph{set kernel} that is used to build surrogate functions.
This kernel accumulates similarity over set elements to enforce 
permutation-invariance, but this comes at a greater computational cost.
To reduce this burden, we propose two key components: 
(i) a more efficient approximate set kernel which is still positive-definite 
and is an unbiased estimator of the true set kernel 
with upper-bounded variance in terms of the number of subsamples, 
(ii) a constrained acquisition function optimization over sets, 
which uses symmetry of the feasible region that defines a set input.
Finally, we present several numerical experiments which demonstrate 
that our method outperforms other methods.
\end{abstract}

\section{Introduction\label{sec:intro}}

Bayesian optimization is an effective method to optimize
an expensive black-box function.
It has proven useful in several applications, including
hyperparameter optimization~\cite{SnoekJ2012neurips,HutterF2011lion},
neural architecture search~\cite{ZophB2016iclr,KandasamyK2018neurips},
material design~\cite{FrazierP2016ismdd,HaghanifarS2019mh}, and synthetic
gene design~\cite{GonzalezJ2014bayesopt}.
Classic Bayesian optimization assumes a search
region $\calX \subset \bbR^d$
and a black-box function $f$ evaluated
in the presence of additive noise $\epsilon$,
i.e., $y = f(\bx) + \epsilon$ for $\bx \in \calX$.

Unlike this standard Bayesian optimization formulation, 
we assume that a search region is
$\calXset=\{\{\bx_1, \ldots, \bx_m\} \;|\; \bx_i \in \calX \subset \bbR^d\}$
for a fixed positive integer $m$.
Thus, for $\bX \in \calXset$, $f$ would take in a \emph{set} containing $m$ elements,
all of length $d$, and return a noisy function value $y$:
\begin{equation}
    y = f \left( \bX \right) + \epsilon.
    \label{eqn:y_f_X}
\end{equation}

Our motivating example comes from the soft $k$-means clustering algorithm 
over a dataset $\calP = \{\bp_1, \ldots, \bp_N\}$;
in particular, we aim to find the optimal initialization of such an algorithm.
The objective function for this problem is a squared loss
function which takes in the cluster initialization points
$\{\bx_1, \ldots, \bx_k\}$ and returns the weighted distance
between the points in $\calP$ and the converged cluster
centers $\{\bc_1, \ldots, \bc_k\}$.
See \cite{LloydS1982ieeetit} for more details.

Some previous research has attempted to build Gaussian process (GP) models
on set data.
\cite{GarnettR2010ipsn} proposes a method over discrete sets 
using stationary kernels over the first Wasserstein distance between two sets,
though the power set of fixed discrete sets as domain space is not our interest.
However, this method needs the complexity $\calO(n^2 m^3 d)$ 
to compute a covariance matrix with respect to $n$ sets.
Moreover, since it only considers stationary kernels,
GP regression is restricted to a form that cannot express non-stationary models~\cite{PaciorekC2004neurips}.

Therefore, we instead adapt and augment a strategy proposed in \cite{GartnerT2002icml}
involving the creation of a specific \emph{set kernel}.
This set kernel uses a kernel defined on the elements $\bx \in \bbR^d$ of the sets to build up
its own sense of covariance between sets.
In turn, then, it can be directly used to build surrogate functions through
GP regression, which can power the Bayesian optimization strategy, by \lemref{lem:set_pd}.

A key contribution of this article is the development of a
computationally efficient approximation to this set kernel.
Given $n$ total observed function values, the cost of constructing
the matrix required for fitting the GP is $\calO(n^2m^2d)$ 
where $m \geq n$ (see the complexity analysis in \secref{subsec:boos}).
The approximate set kernel proposed in this work 
uses random subsampling to reduce the computational cost to 
$\calO(n^2L^2d)$ for $L < m$ while still producing an unbiased
estimate of the expected value of the true kernel.

Another primary contribution is a constrained acquisition function optimization over set inputs.
The next query set to observe is found by optimizing the acquisition function defined on a set $\bX \in \calXset$.
Using the symmetry of the space, this function can be efficiently optimized 
with a rejection sampling.
Furthermore, we provide a theoretical analysis on cumulative regret bounds of our framework, 
which guarantees the convergence quality in terms of iterations.

\section{Background\label{sec:back}}

In this section, we briefly introduce previous studies, notations, and related work necessary to
understand our algorithm.

\subsection{Bayesian Optimization}

Bayesian optimization seeks to minimize an unknown function $f$
which is expensive to evaluate, 
$\bx^\star = \argmin_{\bx \in \calX} f(\bx)$, 
where $\calX \subset \bbR^d$ is a compact space.
It is a sequential optimization strategy
which, at each iteration, performs
the following three computations:
\begin{enumerate}
    \item Using the $n$ data presently available,
        $\{(\bx_i, y_i)\}$ for $i \in [n]$,
        build a probabilistic surrogate model $s_n$ meant to approximate $f$.
    \item Using the surrogate model $s_n$, compute an acquisition
        function $a_n$, which represents the utility of next acquiring
        data at some new point $\bx$.
    \item Observe $y_{n+1}$ from a true function $f$ at the location $\bx_{n+1}=\argmax_{\bx \in \calX} a_n(\bx)$.
\end{enumerate}

After exhausting a predefined budget $T$,
Bayesian optimization returns the best point, $\bx^\dagger$, that has the minimum observation.
The benefit of this process is that the optimization of the expensive
function $f$ has been replaced by the optimization of much cheaper
and better understood acquisition functions $a_n$.

In this paper, we use GP regression~\cite{RasmussenCE2006book} to produce the surrogate function $s_n$;
from $s_n$, we use the Gaussian process upper confidence bound (GP-UCB) criterion~\cite{SrinivasN2010icml}:
$a_n(\bx) = - \mu_n(\bx) + \beta_n \sigma_n(\bx)$,
where $\mu_n(\cdot)$ and $\sigma_n^2(\cdot)$ are posterior mean and variance functions computed by $s_n$,  
and $\beta_n$ is a trade-off hyperparameter for exploration and exploitation at iteration $n$.
See \cite{BrochuE2010arxiv,ShahriariB2016procieee,FrazierPI2018arxiv}
for the details.

\subsection{Set Kernel}

We introduce the notation required for performing
kernel approximation of functions on sets. A set of $m$ vectors
is denoted as $\bX = \{\bx_1, \ldots, \bx_m\}$ ,
where $\bx_i$ is in a compact space $\calX \subset \bbR^d$.
In a collection of $n$ such
sets (as will occur in the Bayesian optimization setting), the $k$th
set would be denoted $\bX^{(k)} = \{\bx_1^{(k)}, \ldots, \bx_m^{(k)}\}$.
Note that we are restricting all sets to be of the same size
$|\bX^{(k)}| = m$.\footnote{In principle, sets of varying size can be considered,
but we restrict to same sized sets to simplify our analysis.}

To build a GP surrogate, we require a prior belief of
the covariance between elements in
$\calXset = \{\{\bx_1, \ldots, \bx_m\} \;|\; \bx_i \in \calX \subset \bbR^d\}$.
This belief is imposed in
the form of a positive-definite kernel
$\kset: \calXset \times \calXset \to \bbR$; see
\cite{ScholkopfB2002book,FasshauerGE2015ws}
for more discussion on approximation with kernels.
In addition to the symmetry $\kset(\bX^{(i)}, \bX^{(j)}) = \kset(\bX^{(j)}, \bX^{(i)})$
required in standard kernel settings, kernels on sets require an additional property:
the ordering of elements in $\bX$ should be immaterial 
(since sets have no inherent ordering).

Given an empirical approximation of the kernel mean 
$\bmu_{\bX} \approx |\bX|^{-1} \sum_{i = 1}^{|\bX|} \phi(\bx_i)$, 
where $\phi$ is a feature map $\bbR^d \to \bbR^{d'}$ 
and $d'$ is a dimensionality of projected space by $\phi$,
a set kernel~\cite{GartnerT2002icml,MuandetK2017ftml} is defined as
\begin{equation}
    \kset \left( \bX^{(1)}, \bX^{(2)} \right) = \left\langle
        \bmu_{\bX^{(1)}}, \bmu_{\bX^{(2)}} \right\rangle
    = \frac{1}{|\bX^{(1)}| |\bX^{(2)}|}
        \sum_{i = 1}^{|\bX^{(1)}|}
        \sum_{j = 1}^{|\bX^{(2)}|}
        k \left( \bx_i^{(1)}, \bx_j^{(2)} \right),
	\label{eqn:set_kernel}
\end{equation}
where $k(\bx_i, \bx_j) = \langle \phi(\bx_i), \phi(\bx_j) \rangle = \phi(\bx_i)^\top \phi(\bx_j)$.
Here, $k: \calX \times \calX \to \bbR$ is a positive-definite kernel defined 
to measure the covariance between the $d$-dimensional elements of the sets
(e.g., a squared exponential or Mat\'ern kernel).
The kernel \eqref{eqn:set_kernel} arises when comparing
a class of functions on different probability measures
with the intent of understanding if the measures might be
equal~\cite{GrettonA2012jmlr}.

\begin{figure}[t]
	\centering
	\includegraphics[width=0.98\textwidth]{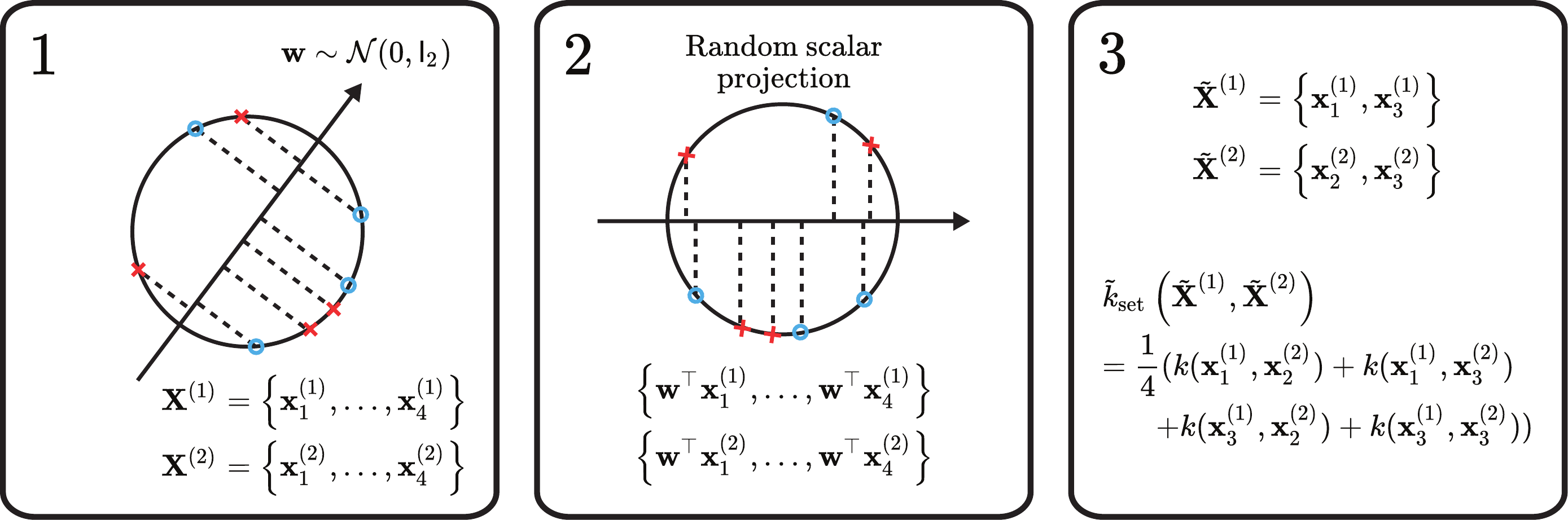}
	\caption{Illustration that shows how to select $L$ instances from sets, which originally have $m$ instances.
	In this example, $m = 4$ and $L = 2$.
	\emph{(Phase 1)} Two set inputs are projected onto a vector that is randomly drawn from the standard Gaussian distribution.
	The points that have same color belong to same set (e.g., blue and red).
	\emph{(Phase 2)} The location of the projections onto the line determines the order of the instances.
	\emph{(Phase 3)} Using the order of instances, two instances uniformly sampled are selected and they are used to compute the approximate set kernel value.}
	\label{fig:mmd_approx}
\end{figure}

\subsection{Related Work}

Although it has been raised in different interests, 
meta-learning approaches dealt with set inputs are promising in a machine learning community,
because they can generalize distinct tasks 
with meta-learners~\cite{EdwardsH2017iclr,ZaheerM2017neurips,FinnC2017icml,GarneloM2018icml}.
In particular, 
they propose feed-forward
neural networks which take permutation-invariant and variable-length
inputs: they have the goal of obtaining features derived from the sets
with which to input to a standard (meta-)learning routine.
Since they consider modeling of set structure, they are related to our work, 
but they are interested in their own specific examples 
such as point cloud classification, few-shot learning,
and image completion.

In Bayesian optimization literature, 
\cite{GarnettR2010ipsn} suggests a method to find a set 
that produces a global minimum with respect to
discrete sets, each of which is an element of power set of entire set.
This approach solves the problem related to set structure using the first Wasserstein distance over sets.
However, for the reason why the time complexity of the first Wasserstein distance is $\calO(n^2 m^3 d)$,
they assume a small cardinality of sets and discrete search space for the global optimization method.
Furthermore, their method restricts the number of iterations for optimizing an acquisition function, 
since the number of iterations should increase exponentially due to the curse of dimensionality.
This implies that finding the global optimum of acquisition function is hard to achieve.

Compared to \cite{GarnettR2010ipsn}, we consider continuous domain space 
which implies an acquired set is composed of any instances in a compact space $\calX$.
We thus freely use off-the-shelf global optimization method 
or local optimization method~\cite{ShahriariB2016procieee}
with relatively large number of instances in sets.
In addition, its structure of kernel 
is $k_{\textrm{st}}(d(\bX^{(1)}, \bX^{(2)}))$
where $k_{\textrm{st}}(\cdot)$ is a stationary kernel~\cite{GentonMG2001jmlr} 
and $d(\cdot, \cdot)$ is a distance function over two sets 
(e.g., in \cite{GarnettR2010ipsn} the first Wasserstein distance).
Using the method proposed in \secref{sec:method},
a non-stationary kernel might be considered in modeling a surrogate function.

Recently, \cite{BuathongP2020aistats} solves a similar set optimization problem using Bayesian optimization with deep embedding kernels.\footnote{Although this work~\cite{BuathongP2020aistats} refers to our preliminary non-archival presentation~\cite{KimJ2019automl}, we mention \cite{BuathongP2020aistats} here due to a close relationship with this work and its importance.}
Compared to our method, it employs a kernel over RKHS embeddings as a kernel for set inputs, 
and shows its strict positive definiteness.

\section{Proposed Method\label{sec:method}}

We first propose and analyze an approximation to the
set kernel \eqref{eqn:set_kernel} for GP regression in this section.
Then, we present a Bayesian optimization framework over sets, 
by introducing our Bayesian optimization with approximate set kernels 
and a constrained optimization method for finding the next set to evaluate.

In order for \eqref{eqn:set_kernel} to be a viable kernel 
of a GP regression, it must be positive-definite.
To discuss this topic, we denote a list
of $n$ sets with the notation
$\frakX = [ \bX^{(1)}, \ldots, \bX^{(n)} ] \in \calXset^n$; 
in this notation, the order of the entries matters.

\begin{lemma}
	\label{lem:set_pd}
	Suppose we have a list $\frakX$ which contains distinct sets
	$\bX^{(i)}$ for $i \in [n]$.  We define the matrix
	$\sK \in \bbR^{n \times n}$ as
	\begin{equation}
	    (\sK)_{ij} = \kset \left( \bX^{(i)}, \bX^{(j)} \right),
	    \label{eqn:kernel_set_all}
	\end{equation}
	for $\kset$ defined with
	a chosen inner kernel $k$ as in \eqref{eqn:set_kernel}.
	Then, $\sK$ is a symmetric positive-semidefinite matrix
	if $k$ is a symmetric positive-definite kernel.
\end{lemma}

This proof appears in \cite[Lemma 1]{HausslerD1999tr},
and is also discussed in \cite{GartnerT2002icml}.

\subsection{Approximation of the Set Kernel\label{subsec:sk_approx_gp}}

Computing \eqref{eqn:kernel_set_all} requires pairwise comparisons 
between all sets present in $\frakX$, which has computational 
complexity $\calO(n^2 m^2 d)$.
To alleviate this cost, we propose to approximate 
\eqref{eqn:set_kernel} with 
\begin{equation}
	\ksettilde \left( \bX^{(1)}, \bX^{(2)}; \pi, \bw, L \right) =
	\kset \left( \bXtilde^{(1)}, \bXtilde^{(2)} \right),
	\label{eqn:set_kernel_approx}
\end{equation}
where $\pi: [m] \to [m]$, $\bw \in \bbR^d$ and $L \in \bbZ_+$ 
and $\bXtilde^{(i)}$ is a subset of $\bX^{(i)}$ which 
is defined by those three quantities (we omit explicitly 
listing them in $\bXtilde^{(i)}$ to ease the notation).

The goal of the approximation strategy is to convert from 
$\bX^{(i)}$ (of size $m$) to $\bXtilde^{(i)}$ (of size $L$) 
in a consistent fashion during all the $\ksettilde$ computations 
comprising $\sK$.
As shown in \figref{fig:mmd_approx}, we accomplish this in two steps:
\begin{enumerate}
    \item Use a randomly generated vector $\bw$ to 
        impose an (arbitrary) ordering of the elements 
        of all sets $\bX^{(i)}$, and 
    \item Randomly permute the indices $[m]$ via a function $\pi$.
\end{enumerate}
These random strategies are defined once before computing the $\sK$ matrix, 
and then used consistently throughout the entire computation.

\begin{algorithm}[t]
	\caption{Forming the $\sKtilde$ approximation to $\sK$\label{alg:approx}}
	\begin{algorithmic}[1]
		\REQUIRE A list of sets $\frakX = [\bX^{(1)}, \ldots, \bX^{(n)}]$,
		    $L \in \bbZ_+$.
		\ENSURE A matrix $\sKtilde$.
		\STATE Draw a random vector $\bw$ from $\calN(0, \sI_d)$.
		\STATE Create a random permutation of the integers
		$[m]$ with which to define $\pi$.
		\STATE Using $\bw$, assign the ordering of elements
		in $\bX^{(i)}$ according to \eqref{eqn:orderingofelements}.
		\STATE Using $\pi$, determine the subsets $\bXtilde^{(i)}$
		as selected according to \eqref{eqn:subsampledef}.
		\STATE Using these subsets and \eqref{eqn:set_kernel_approx},
		populate the matrix with values:
		\begin{equation*}
		    (\sKtilde)_{ij} = \ksettilde(\bX^{(i)}, \bX^{(j)}; \bw, \pi, L).
		\end{equation*}
		\STATE \textbf{return} a computed matrix $\sKtilde$
	\end{algorithmic}
\end{algorithm}

To impose an ordering of the elements, we use a \emph{random scalar projection} 
$\bw \in \bbR^d$ such that the elements of $\bw$ 
are drawn from the standard normal distribution. If the scalar 
projections of each $\bx_i$ are computed, this produces the set 
of scalar values $\{ \bw^\top \bx_1, \ldots, \bw^\top \bx_m \}$, which 
can be sorted to generate an ordered list of 
\begin{equation}
    \label{eqn:orderingofelements}
    [\ell_1, \ldots, \ell_m],\quad
    \bw^\top \bx_{\ell_1} \leq \ldots \leq \bw^\top \bx_{\ell_m},
\end{equation}
for an ordering of distinct indices $\ell_1, \ldots, \ell_m \in [m]$.
Ties between $\bw^\top \bx_i$ values can be dealt with arbitrarily.
The function $\pi$ then is simply a random bijection of the integers 
$[m]$ onto themselves. Using this, we can sample 
$L$ vectors from $\bX^{(i)}$:
\begin{equation}
	\label{eqn:subsampledef}
	\bXtilde^{(i)} = \{ \bx_{\ell_j} \;|\; \ell_j = \pi(j) \ \ \textrm{for} \ \ j \in [L] \}.
\end{equation}
This process, given $\bw$, $\pi$, and $L$, is sufficient for computing $\kset$, 
as presented in~\algref{alg:approx}.

\subsection{Properties of the Approximation\label{subsec:propertiesoftheapproximation}}

The covariance matrix for this approximation of the set kernel, 
which we denote by 
$(\sKtilde)_{ij} = \ksettilde(\bX^{(i)}, \bX^{(j)}; \bw, \pi, L)$, 
should approximate the full version of covariance matrix, 
$\sK$ from \eqref{eqn:kernel_set_all}.
Because of the random structure introduced in 
\secref{subsec:sk_approx_gp}, the matrix $\sKtilde$ will be random.
This will be addressed in 
\thmref{thm:unbiasedestimate}, but for now, $\sKtilde$ 
represents a single realization of that random variable, 
not the random variable itself.
To be viable, this approximation must satisfy 
the following requirements:

\begin{property}
	\label{pro:symmetry}
	The approximation satisfies pairwise symmetry:
	\begin{equation}
		\ksettilde \left( \bX^{(i)}, \bX^{(j)}; \bw, \pi, L \right)
		=
		\ksettilde \left( \bX^{(j)}, \bX^{(i)}; \bw, \pi, L \right).
	\end{equation}
	Since $\bXtilde^{(i)}$ is uniquely defined given
	$\bw, \pi, L$, this simplifies to
	$\kset(\bXtilde^{(i)}, \bXtilde^{(j)}) = \kset(\bXtilde^{(j)}, \bXtilde^{(i)})$, 
	which is true because $\kset$ is symmetric.
\end{property}

\begin{property}
	\label{pro:ordering}
	The ``ordering'' of the elements in the sets $\bX^{(i)}, \bX^{(j)}$
	should not matter when computing $\ksettilde$. Indeed,
	because \eqref{eqn:orderingofelements} enforces ordering
	based on $\bw$, and not whatever arbitrary indexing is
	imposed in defining the elements of the set, the kernel
	will be permutation-invariant.
\end{property}

\begin{property}
	\label{pro:positivedefinite}
	The kernel approximation \eqref{eqn:set_kernel_approx}
	reduces to computing $\kset$ on a lower cardinality
	version of the data (with $L$ elements selected
	from $m$). Because $\kset$ is positive-definite
	on these $L$-element sets, we know
	that $\ksettilde$ is also positive-definite.
\end{property}

\begin{property}
	\label{pro:computation}
	Since the approximation method aims to choose subsets of input sets,
	the computational cost becomes lower than the original formulation.
\end{property}

Missing from these four properties is a statement 
regarding the quality of the approximation. We address 
this in \thmref{thm:unbiasedestimate} and \thmref{thm:variance}, 
though we first start by stating \lemref{lem:innersummation}.

\begin{lemma}
	\label{lem:innersummation}
	Suppose there are two sets $\bX, \bY \in \calXset$.
	Without loss of generality, let $\bX^{(i)}$ and $\bY^{(j)}$ 
	denote the $i$th and $j$th of $\binommL$ possible subsets 
	containing $L$ elements 
	of $\bX$ and $\bY$, respectively, in an arbitrary ordering.
	For $L \in [m]$, 
	\begin{equation}
		\sum_{i=1}^{\binommL}\!\sum_{j=1}^{\binommL}\!\sum_{a=1}^{L}\!\sum_{b=1}^{L}{k(\bxbar^{(i)}_a, \bybar^{(j)}_b)} = \frac{L^2 \binommL^2}{m^2} \!\sum_{c=1}^{m}\!\sum_{d=1}^{m}{k(\bx_c, \by_d)},
	\end{equation}
	where $\bxbar^{(i)}_a$ and $\bybar^{(j)}_b$ are the $a$th and $b$th 
	elements of $\bX^{(i)}$ and $\bY^{(j)}$, respectively, 
	in an arbitrary ordering.
\end{lemma}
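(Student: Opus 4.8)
The plan is to prove this identity by a \emph{double-counting} argument. The key observation is that the left-hand side is a sum over all quadruples $(i,j,a,b)$ of a kernel value $k(\bxbar^{(i)}_a, \bybar^{(j)}_b)$, and since each element $\bxbar^{(i)}_a$ is simply \emph{some} element $\bx_c$ of the original set $\bX$ (and similarly $\bybar^{(j)}_b = \by_d$ for some $d$), the entire sum can be reorganized as a sum over the original pairs $(\bx_c, \by_d)$, each weighted by the number of times it is counted.

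First I would fix an arbitrary pair of original elements $\bx_c$ and $\by_d$ and count how many of the quadruples $(i,j,a,b)$ contribute the term $k(\bx_c, \by_d)$. This term appears exactly when the subset $\bX^{(i)}$ contains $\bx_c$ (at some position $a$) and the subset $\bY^{(j)}$ contains $\by_d$ (at some position $b$). Because the inner sums over $a \in [L]$ and $b \in [L]$ range over every element of the respective subsets, each subset pair $(\bX^{(i)}, \bY^{(j)})$ with $\bx_c \in \bX^{(i)}$ and $\by_d \in \bY^{(j)}$ contributes $k(\bx_c, \by_d)$ exactly once; the ``arbitrary ordering'' within each subset is immaterial precisely because we sum over all positions.

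Next I would count the number of such subset pairs. The number of $L$-element subsets of $\bX$ containing the fixed element $\bx_c$ is $\binom{m-1}{L-1}$, since the remaining $L-1$ elements are chosen freely from the other $m-1$ elements; by identical reasoning the number of $L$-subsets of $\bY$ containing $\by_d$ is $\binom{m-1}{L-1}$. Thus the multiplicity of $k(\bx_c, \by_d)$ is $\binom{m-1}{L-1}^2$, and this multiplicity is independent of the particular $c$ and $d$. Summing over all $c,d$ yields
\begin{equation*}
	\sum_{i=1}^{\binommL}\sum_{j=1}^{\binommL}\sum_{a=1}^{L}\sum_{b=1}^{L} k(\bxbar^{(i)}_a, \bybar^{(j)}_b) = \binom{m-1}{L-1}^2 \sum_{c=1}^{m}\sum_{d=1}^{m} k(\bx_c, \by_d).
\end{equation*}

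Finally I would reconcile this prefactor with the one stated in the lemma via the elementary identity $\binom{m-1}{L-1} = \frac{L}{m}\binommL$, which squares to $\frac{L^2}{m^2}\binommL^2$. There is no genuine obstacle here; the only step requiring care is the bookkeeping in the counting argument, namely confirming that the index ranges in the fourfold sum enumerate each (subset-pair, element-pair) configuration exactly once, so that the multiplicity is precisely $\binom{m-1}{L-1}^2$ and is neither inflated nor reduced by the internal ordering of the subsets.
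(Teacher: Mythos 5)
Your proof is correct and follows essentially the same route as the paper's: both reorganize the fourfold sum as a sum over original element pairs $(\bx_c, \by_d)$ and compute the multiplicity with which each pair appears. The only cosmetic difference is that you express the per-set count explicitly as $\binom{m-1}{L-1}$ and invoke the identity $\binom{m-1}{L-1} = \frac{L}{m}\binommL$, whereas the paper states the same quantity directly as the fraction $L/m$ of the $\binommL$ subsets; your version is, if anything, slightly more explicit about the bookkeeping.
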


\begin{proof}
	We can rewrite the original summation in
	a slightly more convoluted form, as
	\begin{align}
		&\sum_{i=1}^{\binommL}\sum_{j=1}^{\binommL} \sum_{a=1}^{L}\sum_{b=1}^{L}{k(\bxbar^{(i)}_a, \bybar^{(j)}_b)} \nonumber\\
		&=
		\sum_{i=1}^{\binommL}\sum_{j=1}^{\binommL} 
		\sum_{a=1}^{L}\sum_{b=1}^{L}
		\sum_{c=1}^{m}\sum_{d=1}^{m}
		k(\bxbar^{(i)}_a, \bybar^{(j)}_b)
		I_{\bxbar^{(i)}_a, \bybar^{(j)}_b}(\bx_c,\by_d) \nonumber\\
		&=
		\sum_{c=1}^{m}\!\sum_{d=1}^{m}\!
		\left[
		\sum_{i=1}^{\binommL}\!\sum_{j=1}^{\binommL}\!
		\sum_{a=1}^{L}\!\sum_{b=1}^{L}
		k(\bxbar^{(i)}_a, \bybar^{(j)}_b)
		I_{\bxbar^{(i)}_a, \bybar^{(j)}_b}(\bx_c,\by_d)
		\right],
		\label{eqn:startoflemma}
	\end{align}
	where $I_{\bxbar^{(i)}_a, \bybar^{(j)}_b}(\bx_c, \bx_d) = 1$
	if $\bxbar^{(i)}_a = \bx_c$ and $\bybar^{(j)}_b = \by_d$, and
	0 otherwise.  As these are finite summations, they can be
	safely reordered.
	
	The symmetry in the structure and
	evaluation of the summation implies that
	as each $\bx_c$ quantity will be paired with each
	$\by_d$ quantity the same number of times.
	Therefore, we need only consider the number of times
	that these quantities appear.

	We recognize that this summation follows a pattern related to
	Pascal's triangle. Among the $\binommL$ possible subsets
	$\bxbar$ of $\bX$, only the fraction $L / m$ of those
	contain the quantity $\bx_c$ for all $c \in [m]$
	(irrespective of how that entry may be denoted in
	$\bxbar^{(i)}_a$ terminology). Because of the symmetry
	mentioned above, each of those $\bx_c$ quantities is
	paired with each of the $\by_d$ quantities the same
	$\frac{L}{m} \binommL$ number of times.
	This result implies that
	\begin{equation}
		\sum_{i=1}^{\binommL} \sum_{j=1}^{\binommL}
		\sum_{a=1}^{L}\sum_{b=1}^{L}
		k(\bxbar^{(i)}_a, \bybar^{(j)}_b)
		I_{\bxbar^{(i)}_a, \bybar^{(j)}_b}(\bx_c,\by_d)
		= \frac{L^2 \binommL^2}{m^2} k(\bx_c, \by_d),
		\label{eqn:sillysummation}
	\end{equation}
	where $I_{\bxbar^{(i)}_a, \bybar^{(j)}_b}(\bx_c, \by_d) = 1$
	if $\bxbar^{(i)}_a = \bx_c$ and $\bybar^{(j)}_b = \by_d$, and
	0 otherwise. Substituting \eqref{eqn:sillysummation}
	into the bracketed quantity in \eqref{eqn:startoflemma}
	above completes the proof.
\end{proof}

We start by introducing the notation $W$ and $\Pi$
to be random variables such that $W \sim \calN(0,\sI_d)$
and $\Pi$ is a uniformly random permutation of the
integers between 1 and $m$.  These are the distributions
defining the $\bw$ and $\pi$ quantities described above.
With this, we note that
$\ksettilde(\bX, \bY; W, \Pi, L)$ is a random variable.

We also introduce the notation $\sigma_L(\bX)$
to be the distribution of random subsets of $\bX$ with
$L$ elements selected without replacement, the outcome
of the subset selection from \secref{subsec:sk_approx_gp}.
This notation allows us to write the quantities
\begin{align}
	\bbE_{W, \Pi} [ \ksettilde(\bX, \bY; W, \Pi, L) ] &= \bbE_{\bXbar, \bYbar} [\kset(\bXbar, \bYbar) ] 
		\equiv
		\bbE[\kset(\bXbar, \bYbar)], \label{eqn:expectedvaluedef}\\
	\Var_{W, \Pi} [ \ksettilde(\bX, \bY; W, \Pi, L) ] &= \Var_{\bXbar, \bYbar} [\kset(\bXbar, \bYbar) ]
		\equiv
		\Var[\kset(\bXbar, \bYbar)],
	\label{eqn:variancedef}
\end{align}
for $\bXbar \sim \sigma_L(\bX), \bYbar \sim \sigma_L(\bY)$.  We
have dropped the random variables from the expectation and variance
definitions for ease of notation.

\begin{theorem}
	\label{thm:unbiasedestimate}
	Suppose that we are given two sets $\bX, \bY \in \calXset$ and
	$L \in \bbZ_+$.  Suppose, furthermore, that $\bw$ and $\pi$
	can be generated randomly as defined in
	\secref{subsec:sk_approx_gp} to form subsets
	$\bXtilde$ and $\bYtilde$.
	The value of $\ksettilde(\bX, \bY; \bw, \pi, L)$
	is an unbiased estimator of the value of
	$\kset(\bX, \bY)$.
\end{theorem}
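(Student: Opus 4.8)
The plan is to reduce the statement to the combinatorial identity already proved in \lemref{lem:innersummation}. By the definition recorded in \eqref{eqn:expectedvaluedef}, the value $\ksettilde(\bX,\bY;\bw,\pi,L)$ is unbiased for $\kset(\bX,\bY)$ precisely when $\bbE[\kset(\bXbar,\bYbar)] = \kset(\bX,\bY)$, where $\bXbar,\bYbar$ are the $L$-element subsets produced by the random projection $\bw$ and permutation $\pi$ of \secref{subsec:sk_approx_gp}. So I would first establish a distributional fact: for any fixed realization of $\bw$ the ordering \eqref{eqn:orderingofelements} is a fixed bijection between ranks and elements, and since $\pi$ is a uniformly random permutation the retained index set $\{\pi(1),\ldots,\pi(L)\}$ is a uniformly random $L$-subset of $[m]$. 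Consequently $\bXbar$ is uniform over the $\binommL$ subsets of $\bX$, and likewise $\bYbar$ over those of $\bY$.

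Granting this, I would expand the expectation as a normalized sum over subset pairs and then unfold each $\kset$ through its definition \eqref{eqn:set_kernel}:
\begin{align*}
\bbE[\kset(\bXbar, \bYbar)]
&= \frac{1}{\binommL^2}\sum_{i=1}^{\binommL}\sum_{j=1}^{\binommL} \kset(\bX^{(i)}, \bY^{(j)}) \\
&= \frac{1}{\binommL^2 L^2}\sum_{i=1}^{\binommL}\sum_{j=1}^{\binommL}\sum_{a=1}^{L}\sum_{b=1}^{L} k(\bxbar^{(i)}_a, \bybar^{(j)}_b).
\end{align*}
The quadruple sum on the right is exactly the quantity evaluated in \lemref{lem:innersummation}. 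Substituting its value $\frac{L^2\binommL^2}{m^2}\sum_{c=1}^{m}\sum_{d=1}^{m} k(\bx_c, \by_d)$ cancels the prefactors $\binommL^2$ and $L^2$, leaving $\frac{1}{m^2}\sum_{c,d} k(\bx_c, \by_d) = \kset(\bX, \bY)$ by \eqref{eqn:set_kernel}. The remaining work is routine arithmetic cancellation.

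The step I expect to be the main obstacle is justifying the passage to the uniform average over all $\binommL^2$ ordered pairs of subsets. Marginal uniformity of $\bXbar$ and of $\bYbar$ does not on its own fix $\bbE[\kset(\bXbar,\bYbar)]$, since this expectation depends on the \emph{joint} law of the two selections, and the construction of \secref{subsec:sk_approx_gp} couples them through the shared $\bw$ and $\pi$. For \lemref{lem:innersummation} to apply as written, each element pair $(\bx_c,\by_d)$ must be retained with probability $(L/m)^2$, i.e. the inclusion events for $\bX$ and for $\bY$ must factorize; this is the only delicate point. I would therefore make this explicit, either by treating the two subset selections as drawn independently from $\sigma_L(\bX)$ and $\sigma_L(\bY)$, or by directly checking that the projection-plus-permutation scheme induces the required product inclusion probabilities. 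Once this independence is pinned down, the displayed chain of equalities together with \lemref{lem:innersummation} completes the argument.
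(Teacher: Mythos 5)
Your proposal follows the paper's proof essentially step for step: the paper likewise reduces unbiasedness to $\bbE[\kset(\bXbar,\bYbar)] = \kset(\bX,\bY)$, posits the uniform product law $p(\bXbar = \bXbar^{(i)}, \bYbar = \bYbar^{(j)}) = 1/\binommL^2$, unfolds $\kset$ via \eqref{eqn:set_kernel} into the same quadruple sum, and cancels the prefactors with \lemref{lem:innersummation}; the only cosmetic difference is that the paper first disposes of the degenerate case $L=m$, where the estimator is deterministic.

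Regarding the obstacle you flag at the end: your instinct is right, and it is worth being precise that only the first of your two proposed resolutions can succeed. The paper takes that route implicitly --- in \eqref{eqn:expectedvaluedef} the expectation is taken with $\bXbar \sim \sigma_L(\bX)$ and $\bYbar \sim \sigma_L(\bY)$ treated as independent draws, which is exactly what the product pmf $1/\binommL^2$ encodes. Your second route, verifying that the shared $(\bw,\pi)$ scheme of \secref{subsec:sk_approx_gp} induces product inclusion probabilities, fails: since both sets use the same retained rank set $S=\{\pi(1),\ldots,\pi(L)\}$, the pair $(\bx_c,\by_d)$ is jointly retained with probability $L/m$ when the $\bw$-ranks of $\bx_c$ and $\by_d$ coincide and $L(L-1)/(m(m-1))$ otherwise, and neither equals $L^2/m^2$. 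The case $\bY=\bX$ makes the resulting bias concrete: then $\bYbar=\bXbar$ always, and for a kernel with $k(\bx,\bx)=1$ and well-separated points, $\bbE[\kset(\bXbar,\bXbar)] \approx 1/L$ while $\kset(\bX,\bX) \approx 1/m$. The coupling cannot simply be dropped, either, because the consistency of $\bw$ and $\pi$ across sets is what \proref{pro:positivedefinite} needs for positive semidefiniteness of $\sKtilde$. So your displayed chain of equalities is a complete proof under the independent-subsampling reading --- the reading the paper itself uses --- but the delicate point you identified is genuine, and the paper resolves it by definition rather than by verification.
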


\begin{proof}
	Our goal is to show that
	$\bbE[\kset(\bXbar, \bYbar)]\!=\!\kset(\bX, \bY)$, where
	$\bbE[\kset(\bXbar, \bYbar)]$ is defined in
	\eqref{eqn:expectedvaluedef}.
    
	We first introduce an extreme case: $L = m$.
	If $L = m$, the subsets we are constructing are the full sets,
	i.e., $\sigma_m(\bX)$ contains only one element, $\bX$.  Thus,
	$\ksettilde(\bX, \bY; W, \Pi, m)=\kset(\bX, \bY)$ is not a random
	variable.
	
	For $1 \leq L < m$, we compute this expected
	value from the definition (with some abuse of notation):
	\begin{equation}
        \bbE[\kset(\bXbar, \bYbar)]
	    = \sum_{\bXbar, \bYbar} \kset(\bXbar, \bYbar) p(\bXbar, \bYbar).
	\end{equation}
	There are $\binommL$ subsets, all of which could be
	indexed (arbitrarily) as $\bXbar^{(i)}$ for
	$1 \leq i \leq \binommL$.
	The probability mass function is uniform across all	subsets,
	meaning that
	$p(\bXbar = \bXbar^{(i)}, \bYbar = \bYbar^{(j)}) = 1 / \binom{m}{L}^2$.
	Using this, we know
	\begin{equation}
	    \bbE[\kset(\bXbar, \bYbar)]
	    = \sum_{i = 1}^{\binommL} \sum_{j=1}^{\binommL}
	        \kset(\bXbar^{(i)}, \bYbar^{(j)}) \frac{1}{\binommL^2}.
        \label{eqn:exp_kset_bXbar_bYbar}
	\end{equation}
	We apply \eqref{eqn:set_kernel} to see that
	\begin{equation}
	    \kset(\bXbar^{(i)},\bYbar^{(j)}) =
	        \frac{1}{L^2}
	        \sum_{a = 1}^L \sum_{b = 1}^L k(\bxbar^{(i)}_a, \bybar^{(j)}_b),
        \label{eqn:kset_bXbari_bXbarj}
	\end{equation}
	following the notational conventions used above. The expectation
	involves four nested summations,
	\begin{equation}
	    \bbE[\kset(\bXbar,\bYbar)]
	    =
	    \frac{1}{L^2\binommL^2}
	    \sum_{i=1}^{\binommL} \sum_{j=1}^{\binommL}
	    \sum_{a=1}^L\sum_{b=1}^L k(\bxbar^{(i)}_a, \bybar^{(j)}_b).
	\end{equation}
	We utilize \lemref{lem:innersummation} to rewrite
	this as
	\begin{equation}
        \bbE[\kset(\bXbar,\bYbar)]
        = \frac{1}{L^2\binommL^2}
    	\frac{L^2 \binommL^2}{m^2}
        \sum_{c=1}^{m} \sum_{d=1}^{m}{k(\bx_c, \by_d)} 
        = \frac{1}{m^2}
        \sum_{c=1}^{m} \sum_{d=1}^{m}{k(\bx_c, \by_d)}.
	\end{equation}
\end{proof}

\begin{theorem}
\label{thm:variance}
    Under the same conditions as in \thmref{thm:unbiasedestimate}, 
    suppose that $k(\bx, \bx')\geq0$ for all $\bx, \bx' \in \calX$.
    The variance of $\ksettilde(\bX, \bY; \bw, \pi, L)$ is bounded by a function of $m$, $L$ and $\kset(\bX, \bY)$:
    \begin{equation}
    \Var \left[ \ksettilde(\bX, \bY; \bw, \pi, L) \right] 
    \leq \left(\frac{m^4}{L^4} - 1\right)\kset(\bX, \bY)^2.
    \end{equation}
\end{theorem}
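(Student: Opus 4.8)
The plan is to leverage the unbiasedness already proved in \thmref{thm:unbiasedestimate} to reduce the variance to a pure second-moment bound, and then to control that second moment by a crude but clean deterministic estimate that relies on the nonnegativity hypothesis $k(\bx, \bx') \geq 0$. Writing $\bXbar \sim \sigma_L(\bX)$ and $\bYbar \sim \sigma_L(\bY)$ as in \eqref{eqn:variancedef}, the definition of variance combined with $\bbE[\kset(\bXbar, \bYbar)] = \kset(\bX, \bY)$ gives
\begin{equation*}
\Var[\kset(\bXbar, \bYbar)] = \bbE[\kset(\bXbar, \bYbar)^2] - \kset(\bX, \bY)^2,
\end{equation*}
so it suffices to establish the second-moment bound $\bbE[\kset(\bXbar, \bYbar)^2] \leq \frac{m^4}{L^4}\kset(\bX, \bY)^2$; the claimed inequality then follows by subtraction.

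Next I would derive a pointwise bound valid for \emph{every} subset pair, not just on average. Indexing the $\binommL$ subsets as $\bXbar^{(i)}$ and $\bYbar^{(j)}$ exactly as in the proof of \thmref{thm:unbiasedestimate}, I use \eqref{eqn:kset_bXbari_bXbarj} to write $\kset(\bXbar^{(i)}, \bYbar^{(j)}) = L^{-2}\sum_{a=1}^{L}\sum_{b=1}^{L} k(\bxbar^{(i)}_a, \bybar^{(j)}_b)$. The crucial step is that $k \geq 0$ makes every summand nonnegative, so enlarging the $L \times L$ subset sum to the full $m \times m$ sum can only increase it:
\begin{equation*}
\sum_{a=1}^{L}\sum_{b=1}^{L} k(\bxbar^{(i)}_a, \bybar^{(j)}_b) \leq \sum_{c=1}^{m}\sum_{d=1}^{m} k(\bx_c, \by_d) = m^2 \kset(\bX, \bY).
\end{equation*}
Dividing by $L^2$ yields $0 \leq \kset(\bXbar^{(i)}, \bYbar^{(j)}) \leq \frac{m^2}{L^2}\kset(\bX, \bY)$ uniformly in $i,j$, where nonnegativity of both sides also follows from $k \geq 0$.

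Finally I would square this uniform bound and average over the randomness. Since both sides are nonnegative, squaring preserves the inequality, giving $\kset(\bXbar^{(i)}, \bYbar^{(j)})^2 \leq \frac{m^4}{L^4}\kset(\bX, \bY)^2$ for every pair. Because this bound is the same constant for all realizations, $\bbE[\kset(\bXbar, \bYbar)^2]$ is bounded by it regardless of the joint distribution over subset pairs (so I do not even need the uniformity established in \thmref{thm:unbiasedestimate}). Substituting this into the variance identity above completes the argument.

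I expect the only conceptually load-bearing point to be the nonnegativity step: it is the hypothesis $k \geq 0$, rather than mere positive-definiteness, that licenses replacing the subset sum by the full-set sum, and everything else is bookkeeping. The resulting factor $m^4/L^4 - 1$ is admittedly loose because the pointwise bound discards all the averaging and overlap structure that makes the estimator concentrate in practice; a bound exhibiting genuine decay as $L \to m$ would require tracking the covariance between overlapping subsets directly, which is substantially more delicate. The present statement deliberately trades that sharpness for a clean, closed-form multiplicative constant.
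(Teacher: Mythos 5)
Your proposal is correct and follows essentially the same route as the paper's proof: reduce the variance to the second moment via the unbiasedness from \thmref{thm:unbiasedestimate}, then use $k \geq 0$ to enlarge each $L \times L$ subset sum to the full $m \times m$ sum, giving the factor $m^4/L^4$. The only cosmetic difference is that you state the bound as a uniform pointwise bound over all subset pairs (observing that uniformity of the subset distribution is not needed), whereas the paper applies the same term-by-term bound inside the explicit uniform average; the mechanism is identical.
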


\begin{proof}
    The variance of $\kset(\bXbar, \bYbar)$, defined in
    \eqref{eqn:variancedef}, is computed as
    \begin{align}
        \Var\left[ \kset(\bXbar, \bYbar) \right] &= \bbE\left[ \left(\kset(\bXbar, \bYbar) - \bbE \left[ \kset(\bXbar,\bYbar) \right]\right)^2 \right] \nonumber \\
        &= \bbE \left[\kset(\bXbar, \bYbar)^2 \right] + \kset(\bX, \bY)^2 - 2 \kset(\bX, \bY) \bbE \left[\kset(\bXbar, \bYbar) \right] \nonumber \\
        &= \bbE \left[\kset(\bXbar, \bYbar)^2 \right] - \kset(\bX, \bY)^2,
        \label{eqn:variance_final}
    \end{align}
    where \thmref{thm:unbiasedestimate} is invoked to produce the
    final line.
    Using \eqref{eqn:exp_kset_bXbar_bYbar} and \eqref{eqn:kset_bXbari_bXbarj}, we can express the first term of \eqref{eqn:variance_final} as
    \begin{equation}
        \bbE \left[\kset(\bXbar, \bYbar)^2 \right] = \sum_{i = 1}^{\binommL} \sum_{j = 1}^{\binommL} \left(
            \frac{1}{L^2} \sum_{a = 1}^L \sum_{b = 1}^L k(\bxbar_a^{(i)}, \bxbar_b^{(j)}) \right)^2 \frac{1}{\binommL^2}.
        \label{eq:expectedsquareddef}
    \end{equation}
    At this point, we invoke the fact that $k(\bx, \bx')\geq0$ to state
    \begin{equation}
        0\leq
        \sum_{a = 1}^L \sum_{b = 1}^L k(\bxbar_a^{(i)}, \bxbar_b^{(j)}) \leq
        \sum_{a = 1}^m \sum_{b = 1}^m k(\bx_a, \bx_b),
    \end{equation}
    which is true because the summation to $m$ terms contains all of the
    elements in the summation to $L$ terms, as well as other (nonnegative)
    elements.  Using this, we can bound \eqref{eq:expectedsquareddef} by
    \begin{align}
        \bbE \left[\kset(\bXbar, \bYbar)^2 \right] &\leq \sum_{i = 1}^{\binommL} \sum_{j = 1}^{\binommL} \left(
            \frac{1}{L^2} \sum_{a = 1}^m \sum_{b = 1}^m k(\bx_a, \bx_b) \right)^2 \frac{1}{\binommL^2} \nonumber \\
        &= \frac{m^4}{\binommL^2 L^4}
            \sum_{i = 1}^{\binommL} \sum_{j = 1}^{\binommL} \left(
            \frac{1}{m^2} \sum_{a = 1}^m \sum_{b = 1}^m k(\bx_a, \bx_b) \right)^2  \nonumber \\
        &= \frac{m^4}{\binommL^2 L^4} \binommL^2 \kset(\bX, \bY)^2 = \frac{m^4}{L^4} \kset(\bX, \bY)^2.
        \label{eqn:inequality_exp_kset}
    \end{align}
    Therefore, with \eqref{eqn:inequality_exp_kset}, \eqref{eqn:variance_final} can be written as
    \begin{equation}
        \Var \left[ \kset(\bXbar, \bYbar) \right]
        \leq \frac{m^4}{L^4} \kset(\bX, \bY)^2 - \kset(\bX, \bY)^2 
        = \left( \frac{m^4}{L^4} - 1 \right) \kset(\bX, \bY)^2,
    \end{equation}
    which concludes this proof.
\end{proof}

The restriction $k(\bx, \bx') \geq 0$ is satisfied by many standard
covariance kernels (such as the Gaussian, the Mat\'ern family and the
multiquadric) as well as some more interesting choices (such as the
Wendland or Wu families of compactly supported kernels).  It does, however,
exclude some oscillatory kernels such as the Poisson kernel as well as
kernels defined implicitly which may have an oscillatory behavior.
More discussion on different types of kernels and their properties can
be found in the kernel literature~\cite{FasshauerGE2015ws}.

By \thmref{thm:variance}, we can naturally infer the fact that 
the upper bound of the variance decreases quickly, as $L$ is close to $m$.

\subsection{Bayesian Optimization over Sets\label{subsec:boos}}

For Bayesian optimization over $\calXset$, the goal is to identify 
the set $\bX \in \calXset$ such that a given function 
$f: \calXset \to \bbR$ is minimized.
As shown in \algref{alg:boos}, Bayesian optimization over 
sets follows similar steps as laid out in \secref{sec:back}, 
except that it involves the space of set inputs 
and requires a surrogate function on $\calXset$.
As we have already indicated, we plan to use a GP 
surrogate function, with prior covariance defined either 
with \eqref{eqn:set_kernel} or \eqref{eqn:set_kernel_approx} 
and a Mat\'ern 5/2 inner kernel $k$.

A GP model requires computation on the 
order of $\calO(n^3)$ at the $n$th step of the Bayesian optimization 
because the $\sK$ matrix must be inverted.
Compared to the complexity for computing a full version of 
the set kernel $\calO(n^2 m^2 d)$, the complexity of computing 
the inverse is smaller if roughly $m \geq n$ (that is, computing the 
matrix can be as costly or more costly than inverting it).
Because Bayesian optimization is efficient sampling-based 
global optimization, $n$ is small and the situation 
$m \geq n$ is reasonable.
Therefore, the computation reduction by our approximation can be effective in reducing 
complexity of all steps for Bayesian optimization over sets.

In addition, since the Cholesky decomposition, instead of matrix inverse 
is widely used to compute posterior mean and variance functions~\cite{RasmussenCE2006book}, 
the time complexity for inverting a covariance matrix can be reduced.
But still, if $m$ is relatively large, our approximation is effective.
In this paper, we compute GP surrogate 
using the Cholesky decomposition.
See the effects of $L$ 
in \secref{subsec:exp_approx}.

\begin{algorithm}[t]
	\caption{Bayesian Optimization over Sets\label{alg:boos}}
	\begin{algorithmic}[1]
		\REQUIRE A domain $\calXset$, a function $f: \calXset \to \bbR$,
			a budget $T \in \bbZ_+$.
		\ENSURE Best acquired set $\bX^\dagger$.
		\STATE Choose an initial point $\bX^{(1)}$ 
			randomly from $\calXset$ and evaluate
			$y_1 = f(\bX^{(1)}) + \epsilon_1$.
		\FOR{$k$ from $1$ to $T-1$}
			\STATE Fit the surrogate model $s_k$ to all data $\{(\bX^{(i)}, y_i)\}_{i=1}^k$.
			\STATE Compute the acquisition function $a_k$ from $s_k$.
			\STATE Identify $\bX^{(k+1)}=\argmax_{\bX \in \calXset} a_k(\bX)$.
			\STATE Evaluate $y_{k+1} = f(\bX^{(k+1)})+\epsilon_{k + 1}$.
		\ENDFOR
		\STATE \textbf{return} $\bX^\dagger = \bX^{(i)}$ if $y_i = \max_{j \in [T]} y_j$
	\end{algorithmic}
\end{algorithm}

\subsection{Acquisition Function Optimization over Sets\label{subsec:afo_sets}}

An acquisition function optimization step is one of the primary steps in Bayesian optimization,
because this step is for finding an acquired example, 
which exhibits the highest potential of the global optimum.
Compared to generic vector-input Bayesian optimization methods,
our Bayesian optimization over sets needs to find a query set $\bX$ on $\calXset$, 
optimizing the acquisition function over sets.

Since $m$ is fixed when we find $\bX$, 
off-the-shelf optimization methods such as L-BFGS-B~\cite{LiuDC1989mp}, 
DIRECT~\cite{JonesDR1993jota}, and CMA-ES~\cite{HansenN2016arxiv} can be employed
where a set is treated as a concatenated vector.
However, because of the symmetry of the space $\calXset$, 
these common optimization methods search more of the space than is required.
For instance, if we optimize the function on $\{1, 2, 3\}$, 
we would not need to consider such the sets $\{1, 3, 2\}, \{2, 1, 3\}, \ldots, \{3, 2, 1\}$.
Therefore, we suggest a constrained acquisition function optimization with rejection sampling algorithm.
The sample complexity of this method decreases by a factor of $m!$.

First of all, we sample all instances of the sets to initialize acquisition function optimization from uniform distribution.
By the rejection sampling, some of the sets sampled are rejected 
if each of them is located outside of the symmetric region.
The optimization method (i.e., CMA-ES) finds the optimal set of the acquisition function, 
starting from those initial sets selected from the aforementioned step.
In the optimization step, our optimization strategy forces every optimization result to locate in the symmetric search space.
Finally, we pick the best one among the converged sets as the next set to observe.

\subsection{Regret Analysis\label{subsec:regret}}

To produce a regret bound on Bayesian optimization with our approximation as well as one of Bayesian optimization with the set kernel,
we follow the framework to prove the regret bounds of the multi-armed bandit-like acquisition function in \cite{SrinivasN2010icml}.
To simplify the analysis, we discuss only the Mat\'ern kernel for $\nu > 2$ as a base kernel for GP.

Inspired by \cite{KandasamyK2015icml}, we assume that 
our objective function $f$ can be expressed as a summation of functions over instances 
and they can be collected to a function that can take a single vector
with the i.i.d.~property of instances in a set:
\begin{equation}
	f(\bX) = \frac{1}{m} \sum_{i = 1}^{m} g(\bx_i) = h([\bx_1, \ldots, \bx_m]),
\end{equation}
which implies that $h$ can take $md$-dimensional concatenated inputs.
Thus, we can state our Bayesian optimization with the set kernel 
follows the cumulative regret bound proposed 
by \cite{SrinivasN2010icml} and \cite{KandasamyK2015icml}.
Given $T$ available sets $\{ \bX^{(1)}, \ldots, \bX^{(T)} \}$, 
let $\delta \in (0, 1)$ and trade-off hyperparameter for GP-UCB 
$\beta_t \in \calO(md \log t)$.
A cumulative regret bound $R_T$ is 
\begin{equation}
R_T = \calO(md 2^d T^{\frac{\nu + d(d + 1)}{2\nu + d(d + 1)}}),
\end{equation}
with the probability at least $1 - \delta$,
under the mild assumptions:
(i) $\kset(\bX, \bX') \leq 1$; 
(ii) the bounded reproducing kernel Hilbert space (RKHS) norm $\|f\|_{\kset} < b$ where $b > 0$.
Moreover, $\ksettilde(\bX, \bX'; \bw, \pi, L) \leq 1$ and $\|f\|_{\ksettilde} < b$ 
are naturally satisfied by their definitions.

By \thmref{thm:unbiasedestimate} and \thmref{thm:variance}, 
we can define \corref{cor:regret}, which is related to the aforementioned regret bound.

\begin{corollary}
	\label{cor:regret}
	Suppose that $0<\delta\ll1$.
	$R_T^{(L)}$ is a cumulative regret computed by the approximate set kernel with 
	$L$ of $m$ instances in each set.
	By \thmref{thm:unbiasedestimate} and \thmref{thm:variance}, 
	$\bbE_{L} [ R_T^{(L)} ] = R_T$, 
	is satisfied with the probability at least $1 - \delta$.
\end{corollary}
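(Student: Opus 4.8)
The plan is to leverage the fact that the entire GP-UCB regret argument used to establish $R_T$ depends on the approximate kernel $\ksettilde$ only through structural properties that are preserved (in expectation) by the approximation. First I would recall the skeleton of the bound stated just above the corollary: following \cite{SrinivasN2010icml} and \cite{KandasamyK2015icml}, the cumulative regret of GP-UCB is controlled by a product of the exploration schedule $\beta_t \in \calO(md \log t)$ and the maximum information gain of the kernel, and the whole derivation requires only (i) that the kernel be positive-definite so the GP posterior is well-defined, (ii) the normalization $\kset(\bX, \bX') \leq 1$, and (iii) the bounded RKHS norm $\|f\|_{\kset} < b$. The preceding paragraph already records that $\ksettilde(\bX, \bX'; \bw, \pi, L) \leq 1$ and $\|f\|_{\ksettilde} < b$, and \proref{pro:positivedefinite} guarantees positive-definiteness of $\ksettilde$ for every realization of $\bw$ and $\pi$.

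The key observation, then, is that for each fixed realization of the randomization $(\bw, \pi)$ the approximate method is itself a valid GP-UCB instance satisfying exactly the three hypotheses above, so the identical chain of inequalities applies verbatim and yields the same asymptotic regret expression $R_T^{(L)} = R_T$ in the $\calO(\cdot)$ sense, with probability at least $1 - \delta$. In other words, the bound is realization-independent. Taking the expectation $\bbE_L[\cdot]$ over the subsampling randomness is then immediate: since every realization attains the same bound, so does the average, giving $\bbE_L[R_T^{(L)}] = R_T$.

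To turn this from an asymptotic statement into one that genuinely uses the two theorems, I would invoke \thmref{thm:unbiasedestimate} and \thmref{thm:variance} directly. Unbiasedness gives $\bbE[\sKtilde] = \sK$, so the approximate Gram matrix introduces no systematic distortion of the posterior; \thmref{thm:variance} bounds the fluctuation of each entry by $(m^4/L^4 - 1)\kset(\bX, \bY)^2$, which is finite and vanishes as $L \to m$, ensuring the relevant expectation is well-defined and that the approximate posterior concentrates on the exact one. Feeding these two facts into the confidence-bound step of the UCB proof shows that the expected confidence intervals coincide with those of the exact set kernel, so the expected instantaneous regrets---and hence their sum $R_T^{(L)}$---match those of the exact-kernel method in expectation.

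The main obstacle I anticipate is precisely this commuting step: the cumulative regret is a genuinely nonlinear functional of the kernel realization (it passes through matrix inversion in the posterior and through the information-gain term), so $\bbE_L[R_T^{(L)}] = R_T$ cannot be read off by naive linearity of expectation. The cleanest way around this, which I would adopt, is to argue at the level of the bound rather than at the level of the exact regret value: because the three structural hypotheses hold for every realization, each realization inherits the same $\calO$ bound $R_T$, making the equality of expectations a formality once unbiasedness (\thmref{thm:unbiasedestimate}) rules out any expectation-level bias and the variance control (\thmref{thm:variance}) guarantees integrability.
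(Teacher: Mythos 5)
Your proposal is correct in the only sense this (informally stated) corollary can be proved, but it takes a genuinely different route from the paper. The paper's own proof is a single sentence: since $\ksettilde$ is an unbiased estimator of $\kset$ with bounded variance (\thmref{thm:unbiasedestimate} and \thmref{thm:variance}), the expectation of the cumulative regret with the approximation equals $R_T$ --- that is, the paper does exactly the expectation-commuting move whose legitimacy you question. Your third paragraph reproduces that argument, but your primary argument is the realization-wise one: for every fixed $(\bw, \pi)$ the approximate kernel is positive-definite (\proref{pro:positivedefinite}), satisfies $\ksettilde(\bX, \bX'; \bw, \pi, L) \leq 1$ and $\|f\|_{\ksettilde} < b$, so each realization is a legitimate GP-UCB instance inheriting the same $\calO\bigl(md 2^d T^{\frac{\nu + d(d+1)}{2\nu + d(d+1)}}\bigr)$ bound with probability at least $1-\delta$, and the expectation over the subsampling randomness then satisfies the same bound as a formality. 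This sidesteps exactly the obstacle you identify --- the regret is a nonlinear functional of the kernel (through the posterior inverse and the information gain), so entrywise unbiasedness of $\sKtilde$ does not by itself transfer to unbiasedness of the regret --- which is the step the paper's one-line proof silently glosses over. What the paper's approach buys is brevity; what yours buys is that the interchange of expectation and the regret functional is never needed. Two caveats: first, the ``equality'' $\bbE_L[R_T^{(L)}] = R_T$ can only be read as agreement of asymptotic bounds (since $R_T$ itself is only specified as an $\calO$-expression), and neither argument establishes more than that; second, your intermediate claim that the \emph{expected confidence intervals coincide} with those of the exact kernel does not literally follow from \thmref{thm:unbiasedestimate}, for the same nonlinearity reason, so that sentence is best dropped in favor of the realization-wise bound argument, which carries the proof on its own.
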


\begin{proof}
	Since our approximation is an unbiased estimator of original set kernel $\kset$ with bounded variance 
	as discussed in \thmref{thm:unbiasedestimate} and \thmref{thm:variance},
	the expectation of cumulative regrets with the approximation is equal to $R_T$.
\end{proof}

\section{Experiments\label{sec:exp}}

In this section, we present various experimental results, 
to show unique applications of our method as well as the motivating problems.
First, we conduct our method on the experiments regarding set kernel approximation and constrained acquisition function optimization, 
in order to represent the effectiveness of our proposed method.
Then, we optimize two synthetic functions and clustering algorithm initialization, 
which take a set as an input.
Finally, we present the experimental results on active nearest neighbor search for point clouds.

We define the application-agnostic baseline methods, Vector and Split:

\noindent \textbf{Vector}
A standard Bayesian optimization is performed over a
$md$-dimensional space where, at the $n$th step,
the available data $\frakX_n \in \calXset^n$ is vectorized to
$[\bx_1, \ldots, \bx_n]$ for $\bx_i \in \bbR^{md}$ with
associated function values.  At each step, the
vectorized next location $\bx_{n+1}$ is converted into
a set $\bX_{n+1}$.

\noindent \textbf{Split}
Individual Bayesian optimization strategies are
executed on the $m$ components comprising $\calX$.
At the $n$th step, the available data $\frakX_n \in \calXset^n$
is decomposed into $m$ sets of data, the $i$th of which
consists of $[\bx^{(i)}_1,\ldots,\bx^{(i)}_n]$ with
associated data.  The $m$ vectors produced during each step
of the optimization are then collected to form $\bX_{n+1}$
at which to evaluate $f$.

For Vector and Split baselines, 
to satisfy the permutation-invariance property, 
we determine the order of elements in a set as the ascending order by the $l^2$ norm of the elements.

We use Gaussian process regression~\cite{RasmussenCE2006book} 
with a set kernel or Mat\'ern 5/2 kernel as a surrogate function.
Because computing the inverse of covariance matrix needs heavy computations, 
we employ the Cholesky decomposition instead~\cite{RasmussenCE2006book}.
For the experiments with set kernel, Mat\'ern 5/2 kernel is used as a base kernel.
All Gaussian process regression models are optimized 
by marginal likelihood maximization with the BFGS optimizer, to find kernel hyperparameters.
We use Gaussian process upper confidence bound criterion~\cite{SrinivasN2010icml} 
as an acquisition function for all experiments.
CMA-ES~\cite{HansenN2016arxiv} and its constrained version are applied 
in optimizing the acquisition function.
Furthermore, five initial points are given to start a single round of Bayesian optimization.
Unless otherwise specified, most of experiments are repeated 10 times.
For the results on execution time, all the results include the time consumed in evaluating a true function.
All results are run via CPU computations.
All implementations which will be released as open source project are written in \texttt{Python}.
Thanks to~\cite{PedregosaF2011jmlr}, 
we use \texttt{scikit-learn} in many parts of our implementations.

\subsection{Set Kernel Approximation\label{subsec:exp_approx}}

\begin{figure}[t]
	\centering
    \subfigure[Kernel approximation]
    {
    	\includegraphics[width=0.41\textwidth]{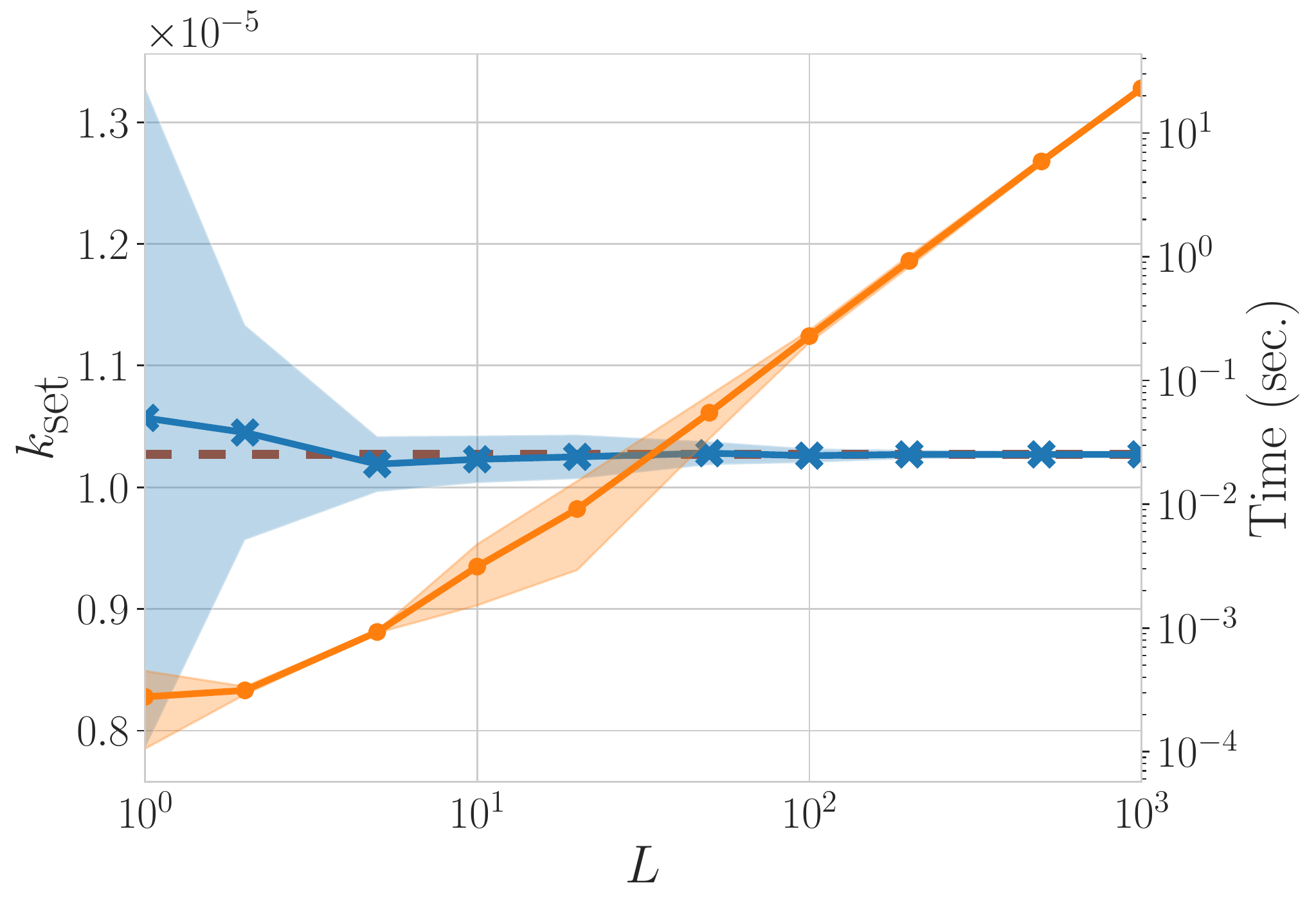}
    	\label{fig:effects_approx_L}
    }
    \quad\quad\quad
    \subfigure[Constrained optimization]
    {
    	\includegraphics[width=0.37\textwidth]{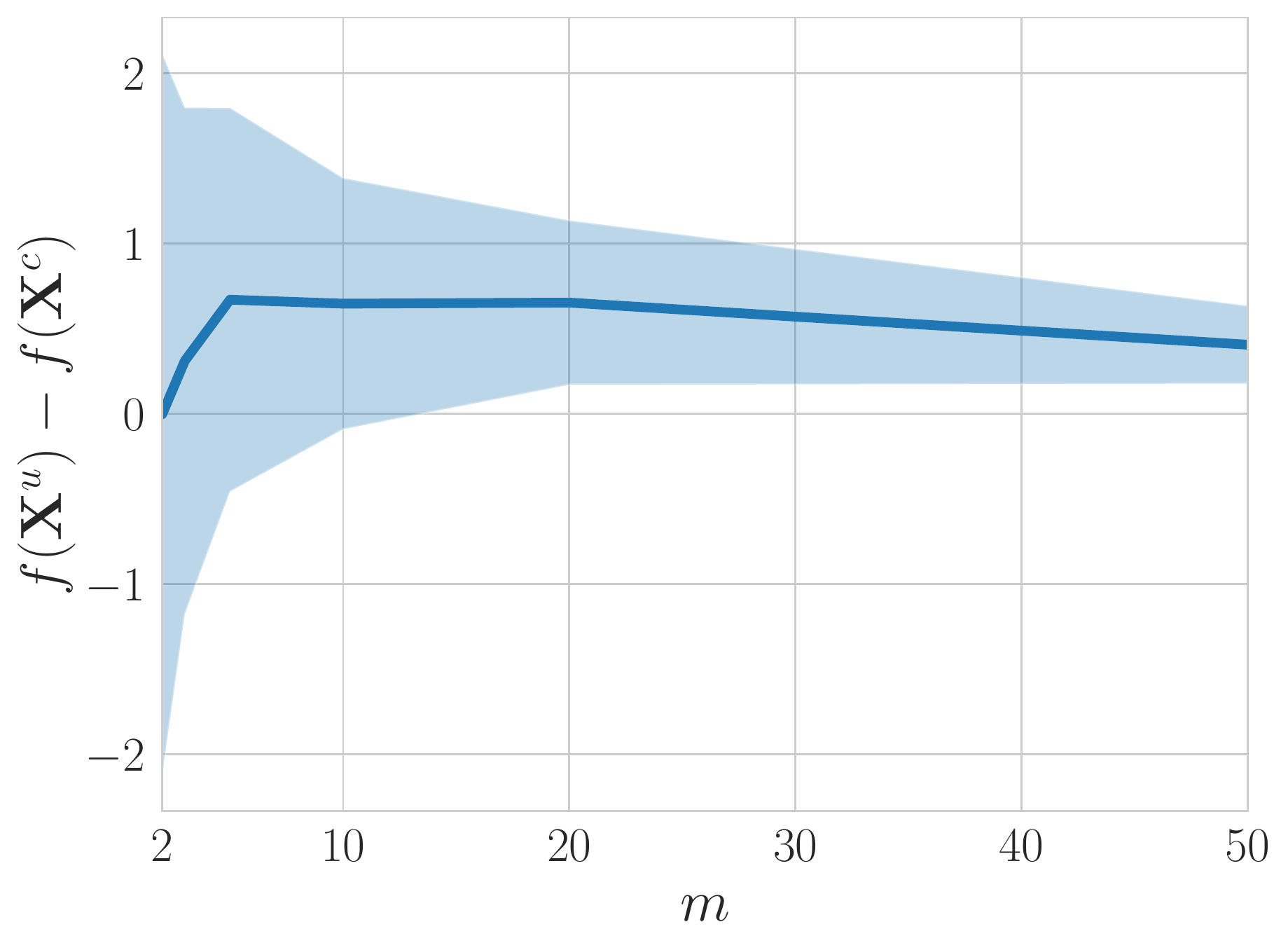}
    	\label{fig:effects_afo}
    }
	\caption{
    	Results on the effects of $L$ for set kernels and constrained optimization for acquisition functions.
    	The mean and standard deviation of each quantity are plotted, computed over 10 trials.
    	\emph{(left)}
    	The lines with \texttt{x} and \texttt{o} indicate kernel values and consumed times, respectively.
        The dashed line is the true set kernel value.
        \emph{(right)}
        $\bX^u$ and $\bX^c$ are the acquired sets obtained by the acquisition function optimization w/o and w/ constraints, respectively.\label{fig:effects}}
\end{figure}

\begin{table}[t]
	\caption{The effects of $L$ for set kernels.
	All settings follow the settings in~\figref{fig:effects_approx_L}.
	The numerical results are rounded to the three decimals, to show the effects precisely.
	\label{tab:kernel_n_d_m_l}}
	\vspace{10pt}
	\centering
	\begin{tabular}{cll}
		\toprule
		$L$ & \multicolumn{1}{c}{$\kset$} & \multicolumn{1}{c}{Time (sec.)} \\
		\midrule
		$1$ & $1.057 \times 10^{-5} \pm 1.387 \times 10^{-6}$ & $(2.776 \pm 0.877) \times 10^{-4}$ \\
		$2$ & $1.045 \times 10^{-5} \pm 4.512 \times 10^{-7}$ & $(3.131 \pm 0.118) \times 10^{-4}$ \\
		$5$ & $1.019 \times 10^{-5} \pm 1.168 \times 10^{-7}$ & $(9.283 \pm 0.089) \times 10^{-4}$ \\
		$10$ & $1.023 \times 10^{-5} \pm 9.980 \times 10^{-8}$ & $(3.133 \pm 0.822) \times 10^{-3}$ \\
		$20$ & $1.025 \times 10^{-5} \pm 9.305 \times 10^{-8}$ & $(9.150 \pm 3.169) \times 10^{-3}$ \\
		$50$ & $1.028 \times 10^{-5} \pm 4.955 \times 10^{-8}$ & $(5.487 \pm 1.075) \times 10^{-2}$ \\
		$100$ & $1.026 \times 10^{-5} \pm 3.030 \times 10^{-8}$ & $(2.282 \pm 0.134) \times 10^{-1}$ \\
		$200$ & $1.027 \times 10^{-5} \pm 1.961 \times 10^{-8}$ & $(9.253 \pm 0.490) \times 10^{-1}$ \\
		$500$ & $1.027 \times 10^{-5} \pm 1.372 \times 10^{-8}$ & $(5.887 \pm 0.159) \times 10^{0}$ \\
		$1000 (= m)$ & $1.027 \times 10^{-5}$ & $(2.295 \pm 0.027) \times 10^{1}$ \\
		\bottomrule
	\end{tabular}
\end{table}

We study the effect of $L$ for the set kernels.
Using a set generated from the standard normal distribution, which has 1,000 $50$-dimensional instances, 
we observe the effects of $L$ as shown in \figref{fig:effects_approx_L}.
$\kset$ converges to the true value as $L$ increases, 
and the variance of $\kset$ value is large when $L$ is small, 
as discussed in \secref{subsec:propertiesoftheapproximation}.
Moreover, the consumed time increases as $L$ increases.
We use Mat\'ern 5/2 kernel as a base kernel.
\tabref{tab:kernel_n_d_m_l} shows the effects of $L$ for set kernels.
As $L$ increases, $\kset$ value is converged to the true value and execution time increases.

\subsection{Constrained Acquisition Function Optimization\label{subsec:exp_const_opt}}

We demonstrate the effects of the constrained acquisition function optimization,
compared to the vanilla optimization method that concatenates a set to a single vector.
In this paper, we use CMA-ES~\cite{HansenN2016arxiv} as an acquisition function optimization method, 
which is widely used in Bayesian optimization~\cite{BergstraJ2011neurips,WangZ2014aistats}.
As we mentioned in \secref{subsec:afo_sets},
the constrained CMA-ES is more sample-efficient than the vanilla CMA-ES.
\figref{fig:effects_afo} (i.e., a minimization problem) represents 
that the function values determined by the constrained method 
are always smaller than the values by the unconstrained method,
because we fix the number of initial samples (i.e., 5).
Moreover, the variance of them decreases, as $m$ is large.
For this experiment, we measure the acquisition performance 
where 20 fixed historical observations are given,
and Synthetic 1 described in \secref{subsec:exp_syn} is used.
Note that the kernel approximation is not applied.

\begin{figure}[t]
	\centering
	\subfigure[Vector]
	{
		\includegraphics[width=0.30\textwidth]{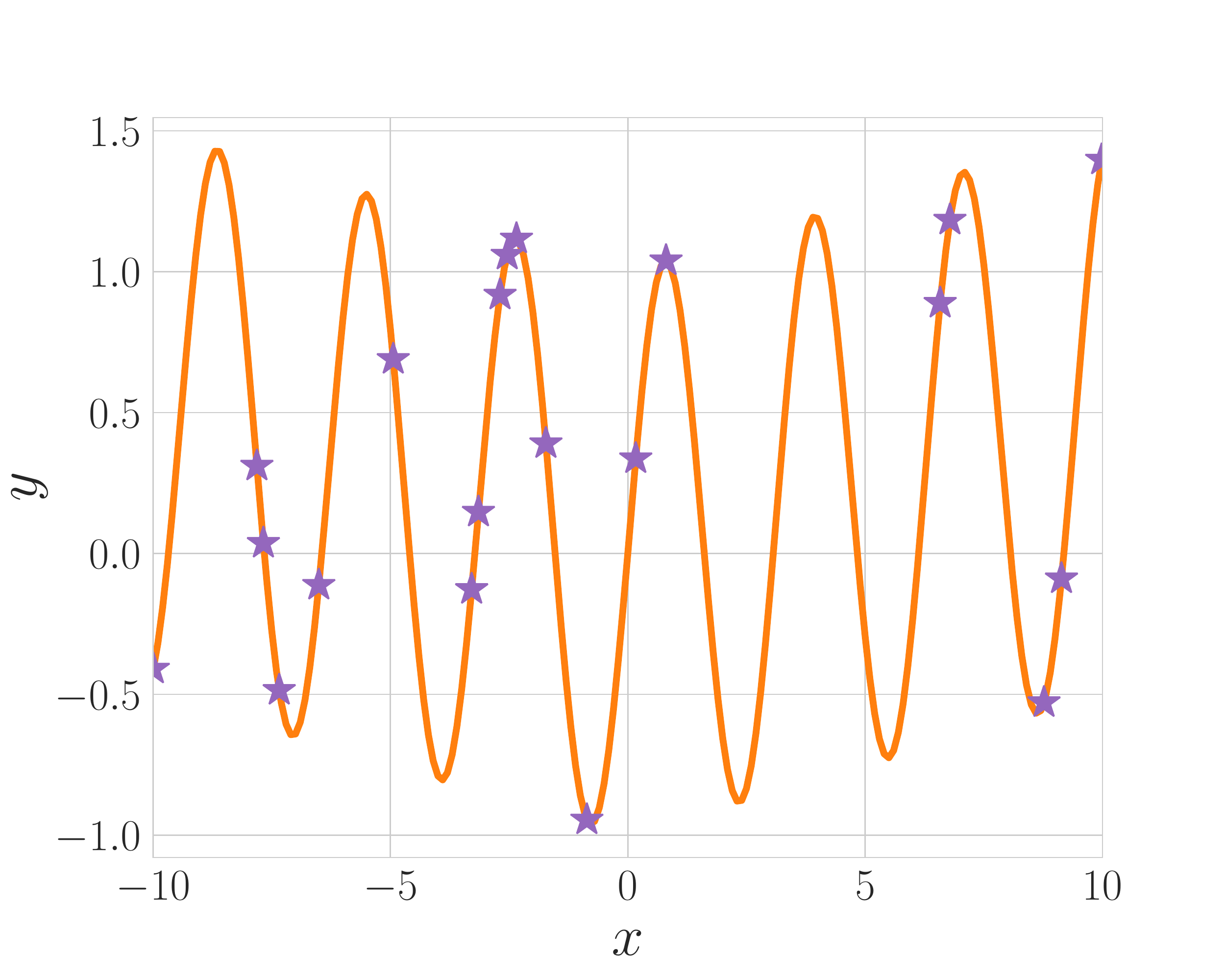}
		\label{subfig:syn_1_concat}
	}
	\subfigure[Split]
	{
		\includegraphics[width=0.30\textwidth]{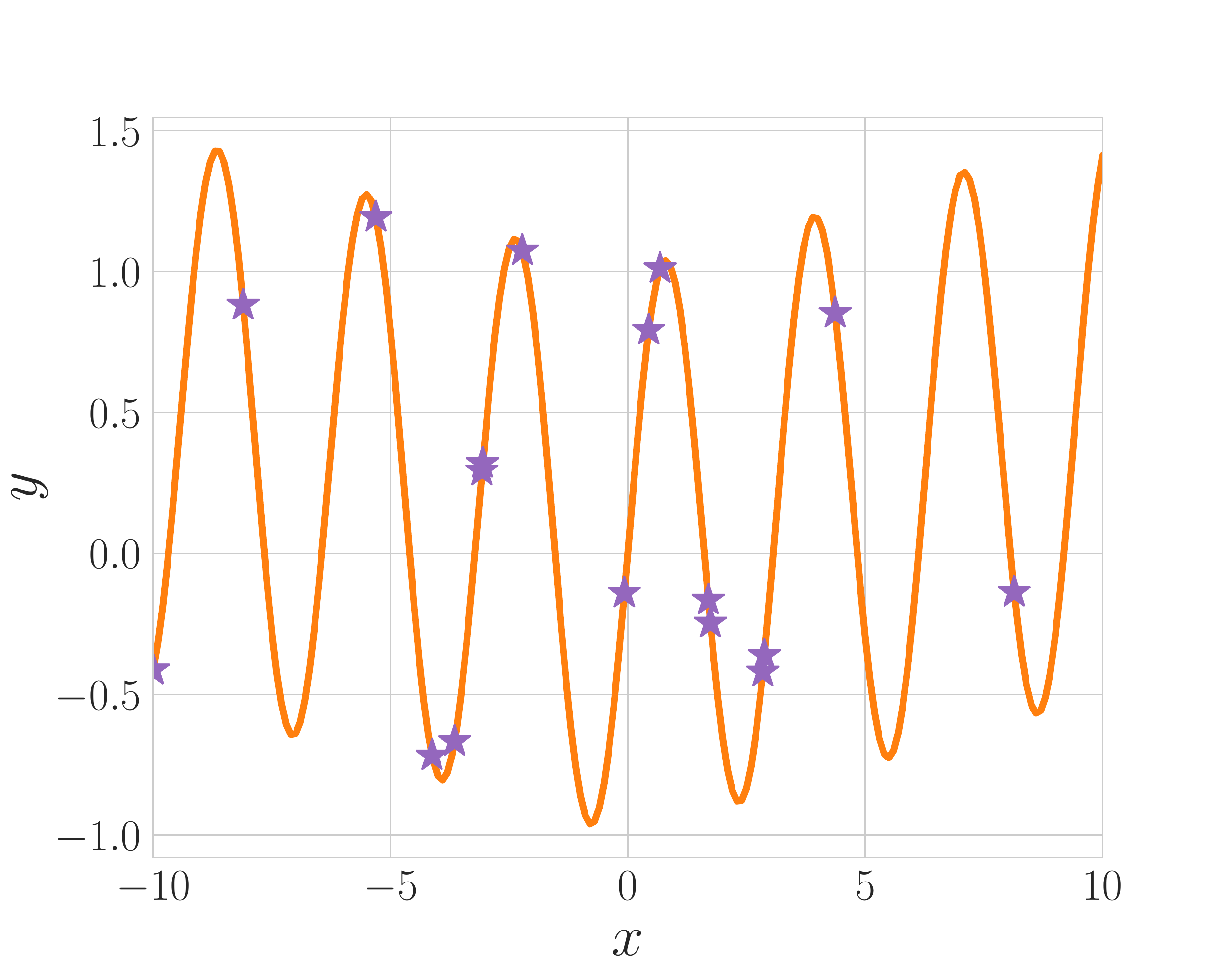}
		\label{subfig:syn_1_split}
	}
	\subfigure[Ours (w/o approx.)]
	{
		\includegraphics[width=0.30\textwidth]{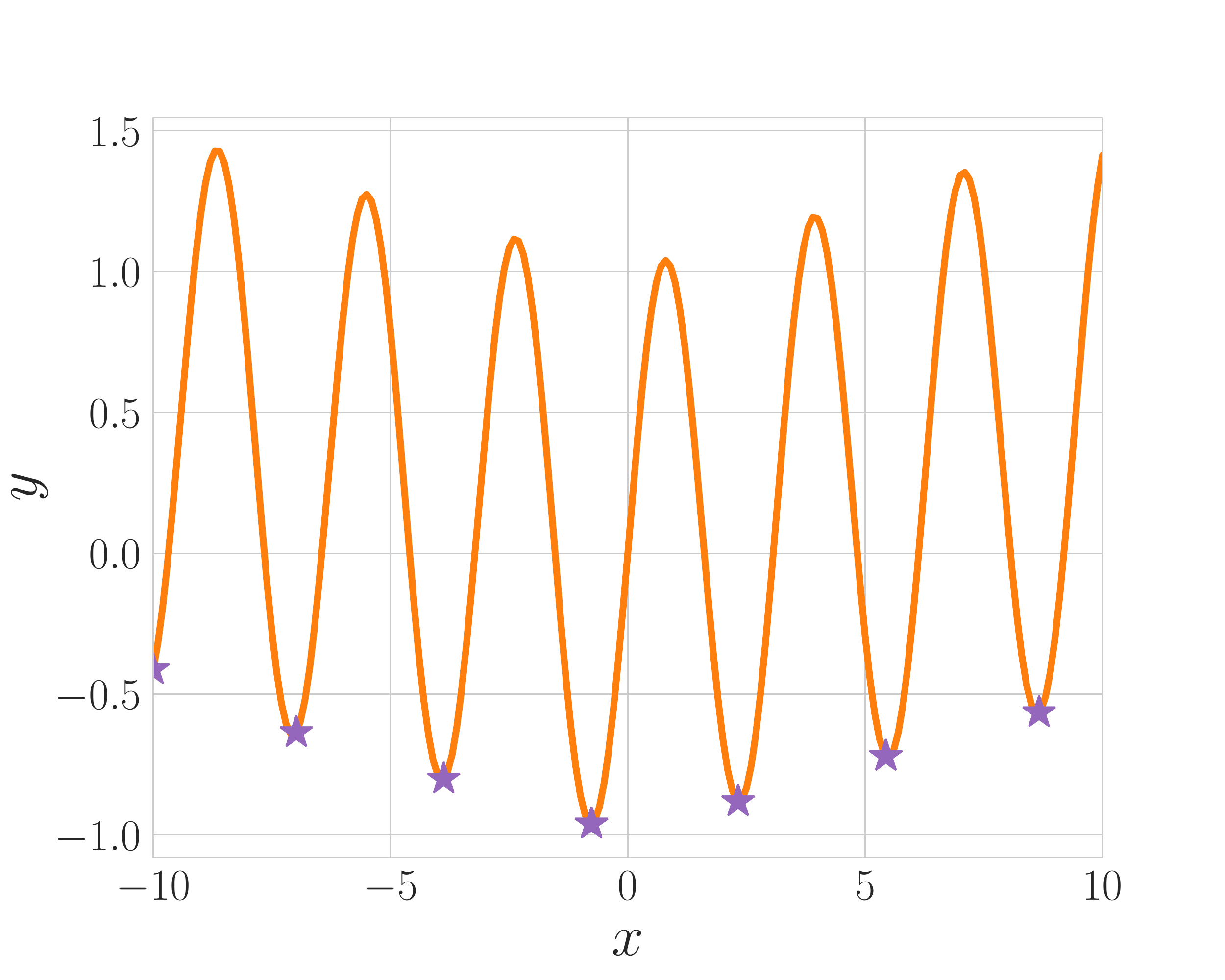}
		\label{subfig:syn_1_ours}
	}
	\subfigure[Vector]
	{
		\includegraphics[width=0.30\textwidth]{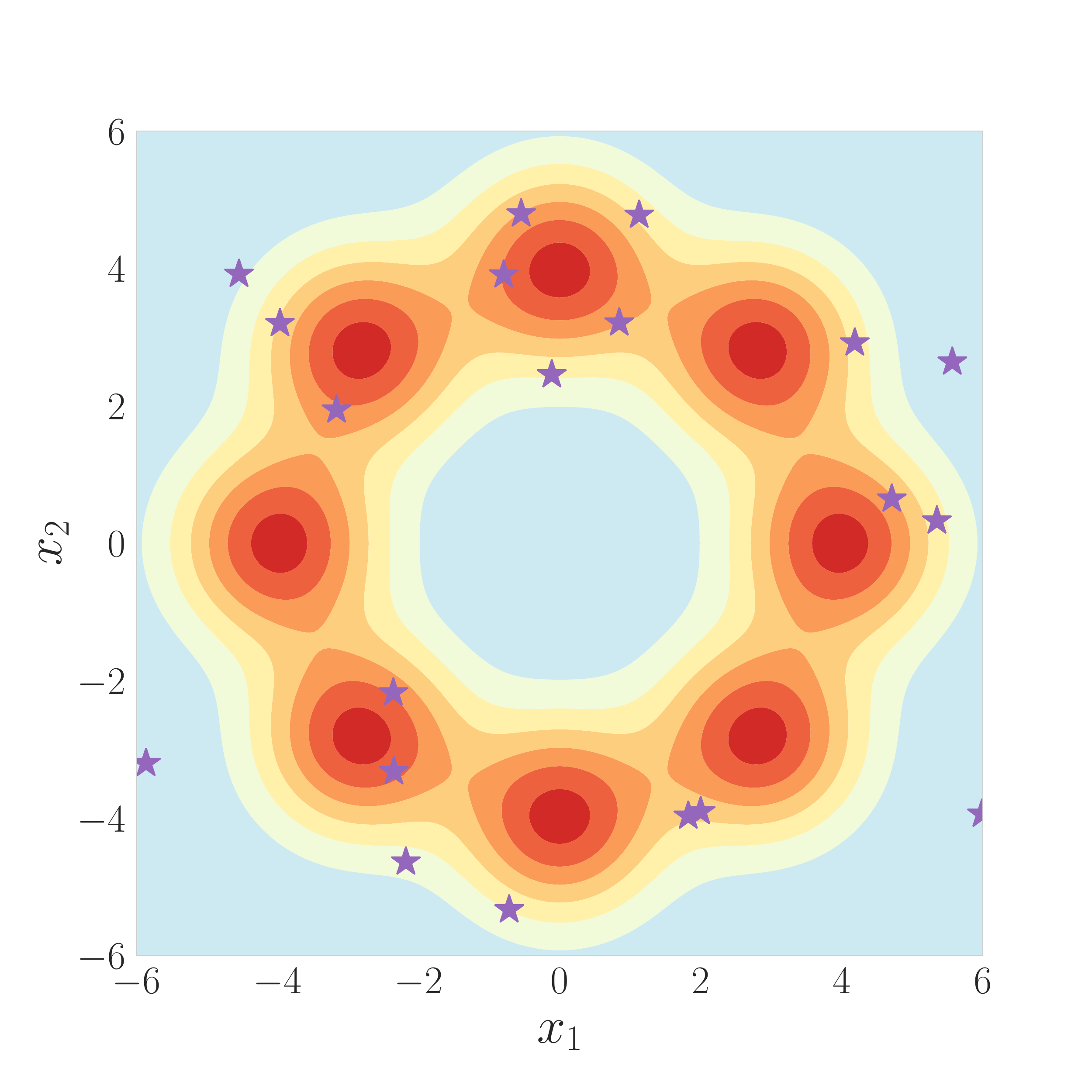}
		\label{subfig:syn_2_concat}
	}
	\subfigure[Split]
	{
		\includegraphics[width=0.30\textwidth]{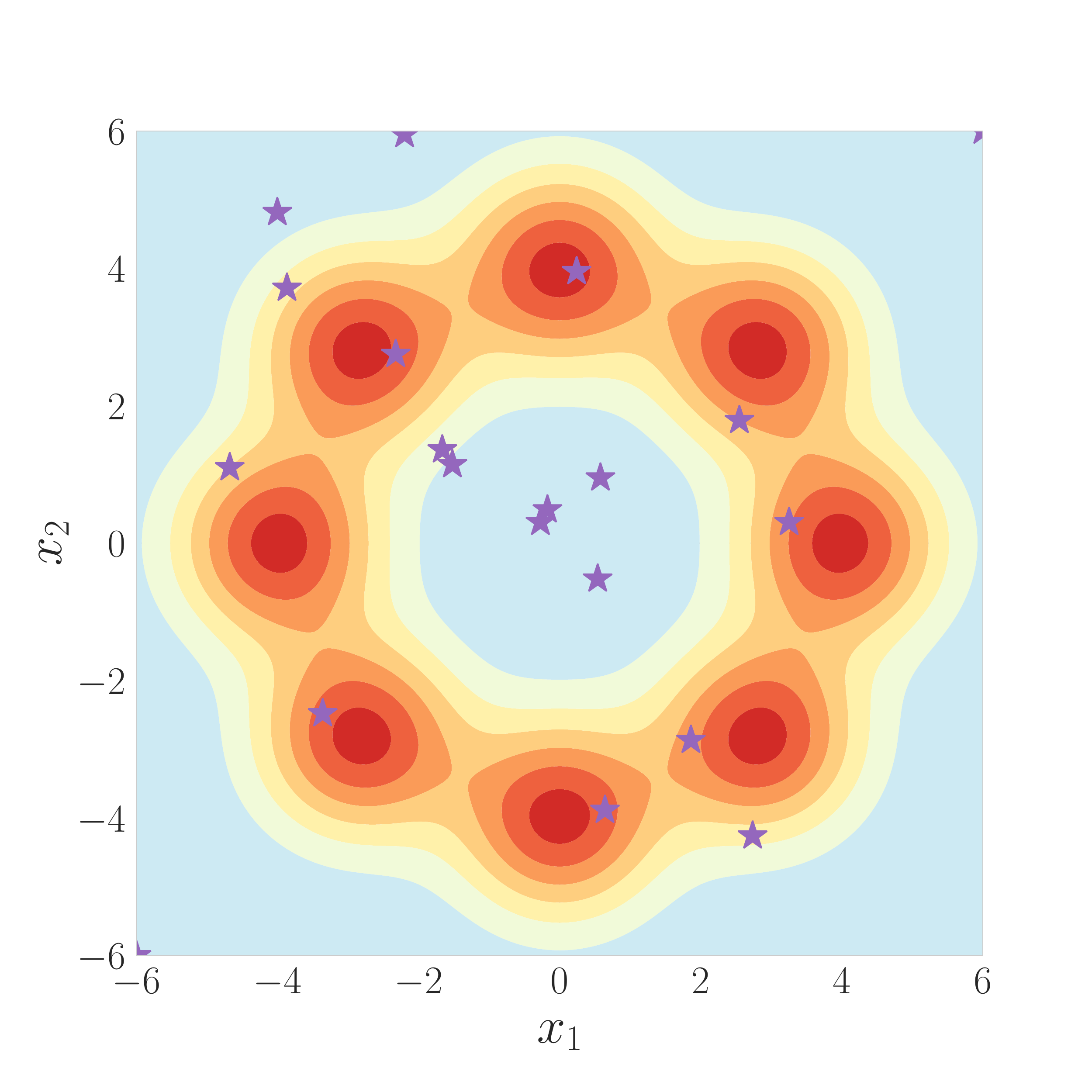}
		\label{subfig:syn_2_split}
	}
	\subfigure[Ours (w/o approx.)]
	{
		\includegraphics[width=0.30\textwidth]{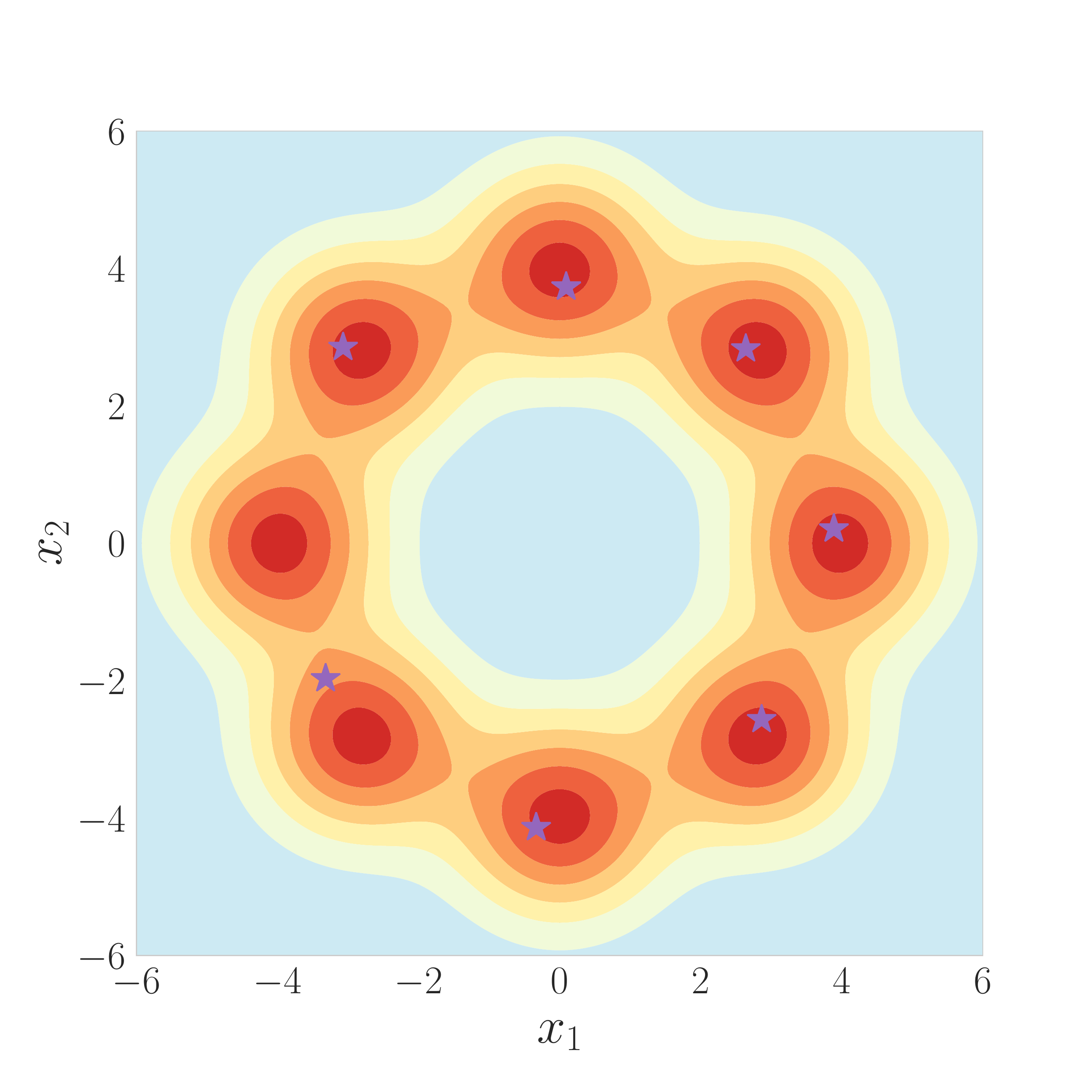}
		\label{subfig:syn_2_ours}
	}
	\caption{Examples of one of the best acquisition results 
	(i.e., purple stars indicate instances in the acquired set) via Vector, Split, and Ours (w/o approximation).
	For Synthetic 1 (first row) and Synthetic 2 (second row), $m$ is set to $20$.}
	\label{fig:syn_1_2}
\end{figure}

\begin{figure}[t]
	\centering
	\subfigure[Synthetic 1]
	{
		\includegraphics[width=0.37\textwidth]{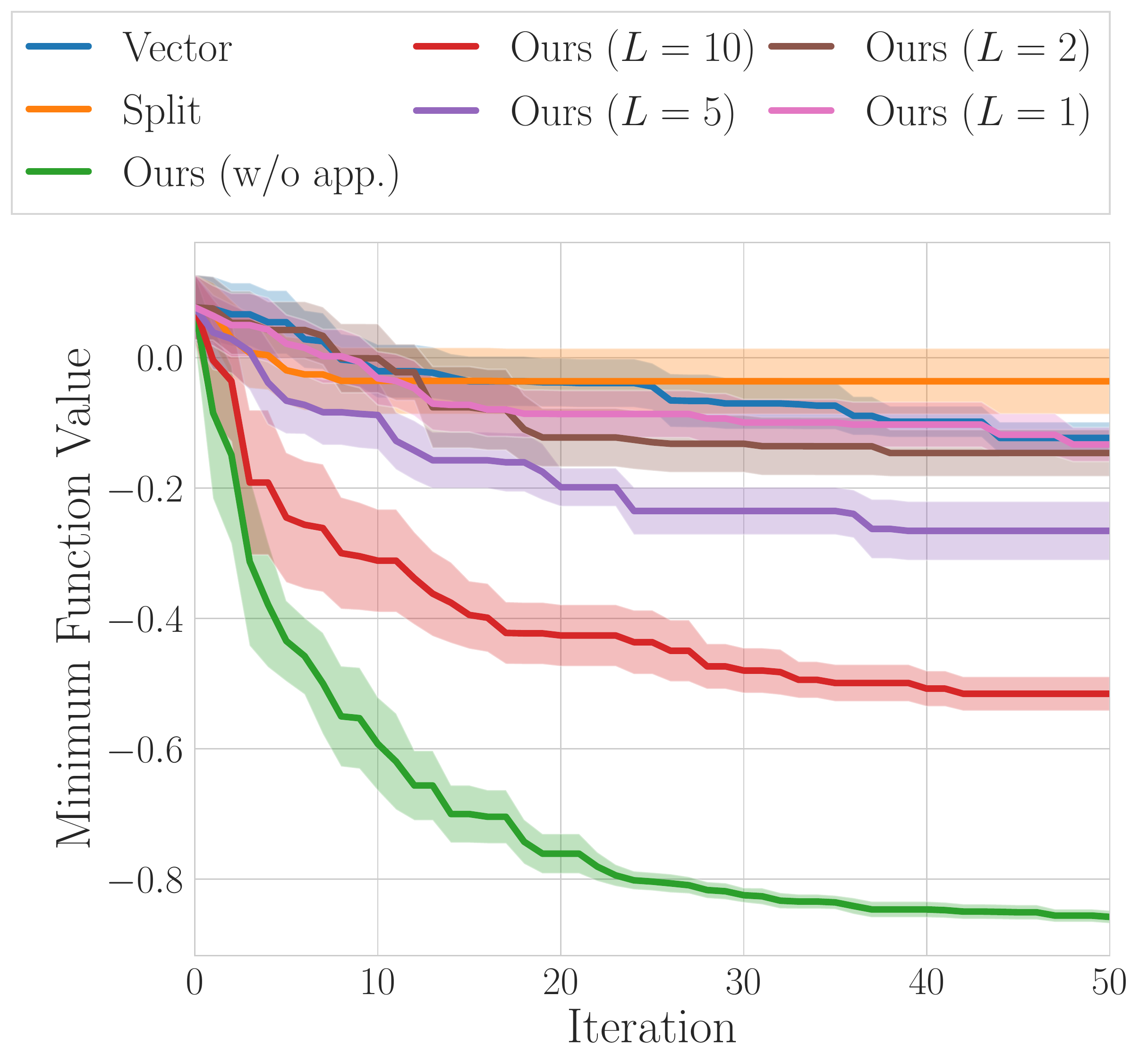}
		\label{subfig:syn_1_20_graph}
	}
	\quad\quad\quad
	\subfigure[Synthetic 2]
	{
		\includegraphics[width=0.37\textwidth]{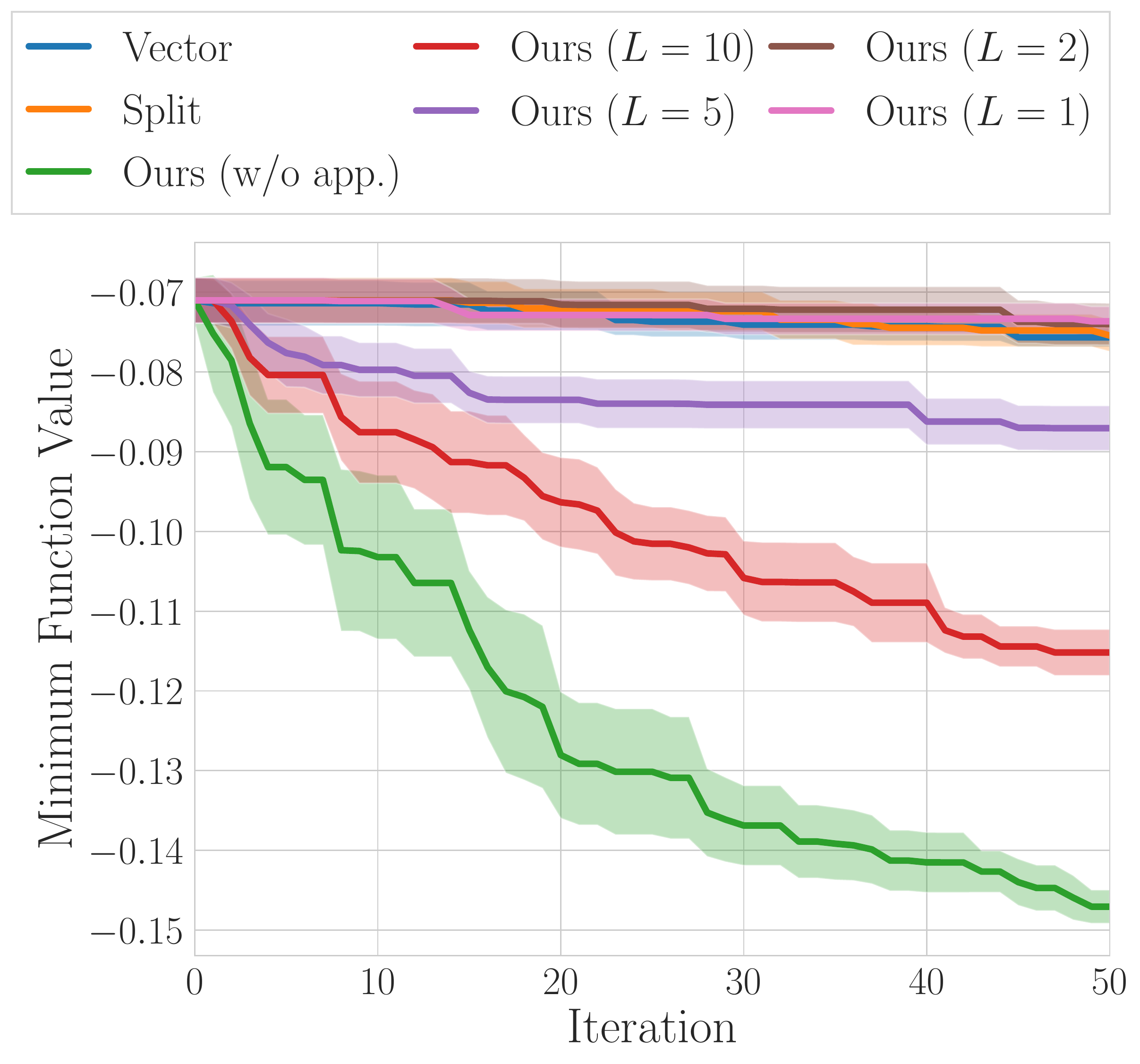}
		\label{subfig:syn_2_20_graph}
	}
	\caption{Results on optimizing two synthetic functions.
	As presented in \figref{fig:syn_1_2}, $m$ is set to $20$.
	All experiments are repeated 10 times.}
	\label{fig:results_syn_1_2}
\end{figure}

\subsection{Synthetic Functions\label{subsec:exp_syn}}

We test two synthetic functions to show Bayesian optimization over sets is a valid approach 
to find an optimal set that minimizes an objective function
$f:\calXset \to \bbR$.  In each setting, there is an auxiliary
function $g: \calX \to \bbR$, and $f$ is defined as
$f(\bX) = \frac{1}{m} \sum_{i = 1}^m g(\bx_i)$.
The $g$ functions are designed to be multi-modal, giving
the opportunity for the set $\bX$ to contain $\bx_i$ values
from each of the modes in the domain.  Additionally, as is
expected, $f$ is permutation invariant (any ordering imposed
on the elements of $\bx$ is immaterial).

\noindent \textbf{Synthetic 1}
We consider $d=1$, $m=20$ and choose $g$ to be a simple periodic function:
\begin{equation}
    g(\bx) = \sin(2 \|\bx\|_2) + | 0.05 \|\bx\|_2 |.
\end{equation}

\noindent \textbf{Synthetic 2}
We consider $d=2$, $m=20$ and a $g$ function which is
the sum of probability density functions:
\begin{equation}
    g(\bx) = - \sum_{i = 1}^8 p(\bx; \mu_i, \Sigma_i),
\end{equation}
where $p$ is the normal density function
with $\mu_i$ depicted in \figref{fig:syn_1_2} and $\Sigma_i = \sI_2$.

As shown in \figref{fig:syn_1_2}, both of these functions
have a clear multimodal structure, allowing for optimal sets
to contain points which are clustered in a single local minima
or to be spread out through the domain in several local minima.
\figref{fig:results_syn_1_2} shows that Vector and Split strategies have difficulty
optimizing the functions.
On the other hand, our proposed method finds optimal
outcomes more effectively.\footnote{
While not our concern here, it is possible that
some amount of distance between points in the set $\bX$
would be desired.  If that were the case, such a desire
could be enforced in the function $f$.}
We study the impact of $L$ when optimizing these two synthetic functions; 
a smaller $L$ should yield faster computations, 
but also a worse approximation
$\sKtilde$ to the true $\sK$ matrix (when $L = m$).

\tabref{tab:effects_syn_20} represents a convergence quality and its execution time for the synthetic functions defined in this work.
As expected, the execution time decreases as $L$ decreases.

\subsection{Clustering Algorithm Initialization\label{subsec:exp_clustering}}

\begin{table}[t]
	\caption{
	Convergence quality and its execution time on two synthetic functions where $m=20$.
	All settings follow the settings in \figref{fig:syn_1_2} and \figref{fig:results_syn_1_2}.
	\label{tab:effects_syn_20}}
	\vspace{10pt}
	\centering
	\begin{tabular}{ccrcr}
		\toprule
		& \multicolumn{2}{c}{\textbf{Synthetic 1}} & \multicolumn{2}{c}{\textbf{Synthetic 2}} \\
		$L$ & Minimum & \multicolumn{1}{c}{Time ($10^3$ sec.)} & Minimum & \multicolumn{1}{c}{Time ($10^3$ sec.)} \\
		\midrule
		$1$ & $-0.133 \pm 0.052$ & $5.682 \pm 0.279$ & $-0.074 \pm 0.004$ & $6.468 \pm 0.130$ \\
		$2$ & $-0.146 \pm 0.071$ & $8.003 \pm 0.156$ & $-0.074 \pm 0.005$ & $9.768 \pm 0.136$ \\
		$5$ & $-0.266 \pm 0.090$ & $16.299 \pm 0.253$ & $-0.087 \pm 0.006$ & $20.451 \pm 0.033$ \\
		$10$ & $-0.516 \pm 0.052$ & $31.010 \pm 0.810$ & $-0.115 \pm 0.006$ & $40.099 \pm 0.085$ \\
		$20 (= m)$ & $-0.858 \pm 0.019$ & $91.589 \pm 2.339$ & $-0.147 \pm 0.004$ & $85.266 \pm 0.297$ \\
		\bottomrule
	\end{tabular}
\end{table}

We initialize clustering algorithms for dataset 
$\calP = [\bp_1, \ldots, \bp_N]$ with Bayesian optimization over sets.
For these experiments, we add four additional baselines for
clustering algorithms:

\noindent \textbf{Random}
This baseline randomly draws $k$ points from 
a compact space $\subset \bbR^d$.

\noindent \textbf{Data}
This baseline randomly samples $k$ points from a dataset $\calP$.
It is widely used in initializing a clustering algorithm.

\noindent \textbf{($k$-means only) $k$-means++~\cite{ArthurD2007soda}}
This is a method for $k$-means clustering with the intuition 
that spreading out initial cluster centers is better than the Data baseline.

\noindent \textbf{(GMM only) $k$-means}
This baseline sets initial cluster centers as the results of $k$-means clustering.

To fairly compare the baselines to our methods, 
the baselines are trained by the whole datasets without splitting.
To compare with the baselines fairly, 
Random, Data, $k$-means++~\cite{ArthurD2007soda}, $k$-means are run 1,000 times.
In Bayesian optimization settings, we split a dataset to training (70\%) and test (30\%) datasets.
After finding the converged cluster centers $\{ \bc_1, \ldots, \bc_k \}$ with training dataset,
the adjusted Rand index (ARI) is computed by test dataset.
The algorithms are optimized over $1 - \textrm{ARI}$.
All clustering models are implemented using \texttt{scikit-learn}~\cite{PedregosaF2011jmlr}.

We test two clustering algorithms for synthetic datasets: 
(i) $k$-means clustering and 
(ii) Gaussian mixture model (GMM).
In addition, 
two real-world datasets are tested to initialize $k$-means clustering:
(i) Handwritten Digits dataset~\cite{DuaD2019uci} and (ii) NIPS Conference Papers dataset~\cite{PerroneV2017jmlr}.
As shown in \figref{fig:kmeans_mog} 
and \figref{fig:kmeans_real}, 
our methods outperform other application-agnostic baselines 
as well as four baselines for clustering methods.

\begin{figure}[t]
	\centering
    	\subfigure[$k$-means clustering]
    	{
    		\includegraphics[width=0.37\textwidth]{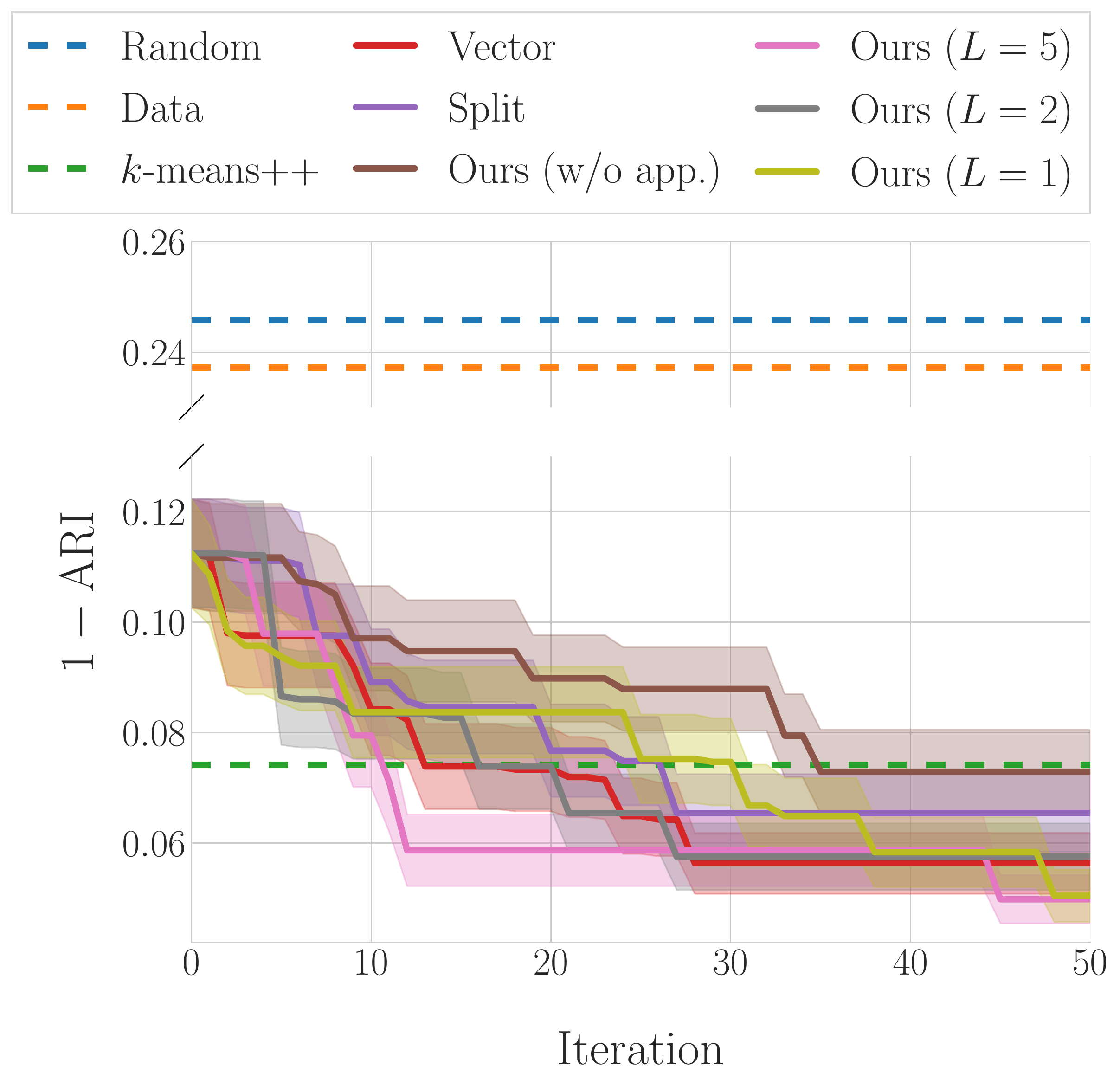}
    		\label{subfig:kmeans_10}
    	}
	\quad\quad\quad
    	\subfigure[Gaussian mixture model]
    	{
    		\includegraphics[width=0.37\textwidth]{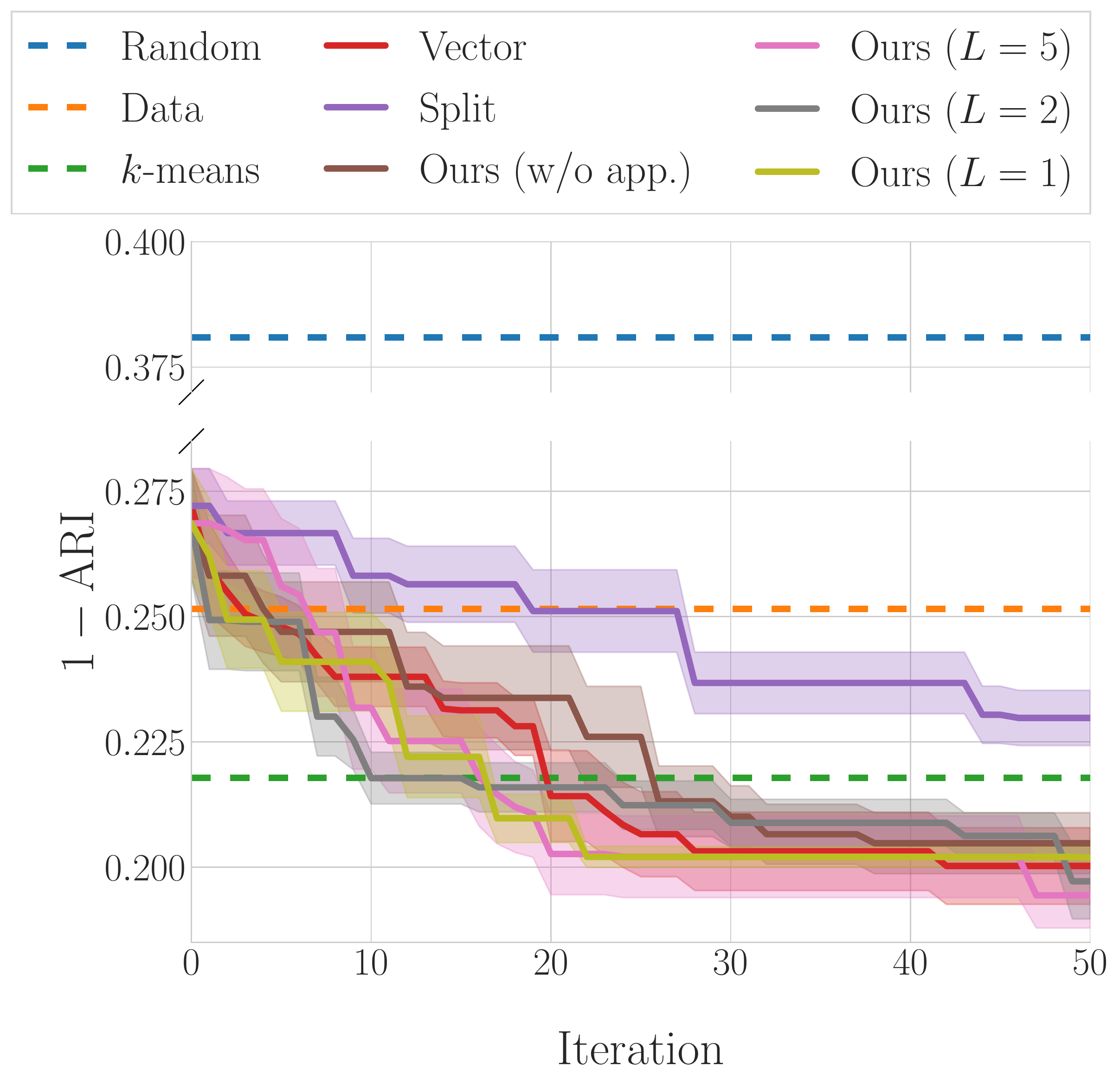}
    		\label{subfig:mog_10}
    	}
	\caption{Results on initializing clustering algorithms: $k$-means clustering and Gaussian mixture model for synthetic datasets.\label{fig:kmeans_mog}}
\end{figure}

\begin{figure}[t]
	\centering
    	\subfigure[Handwritten Digits]
    	{
    		\includegraphics[width=0.37\textwidth]{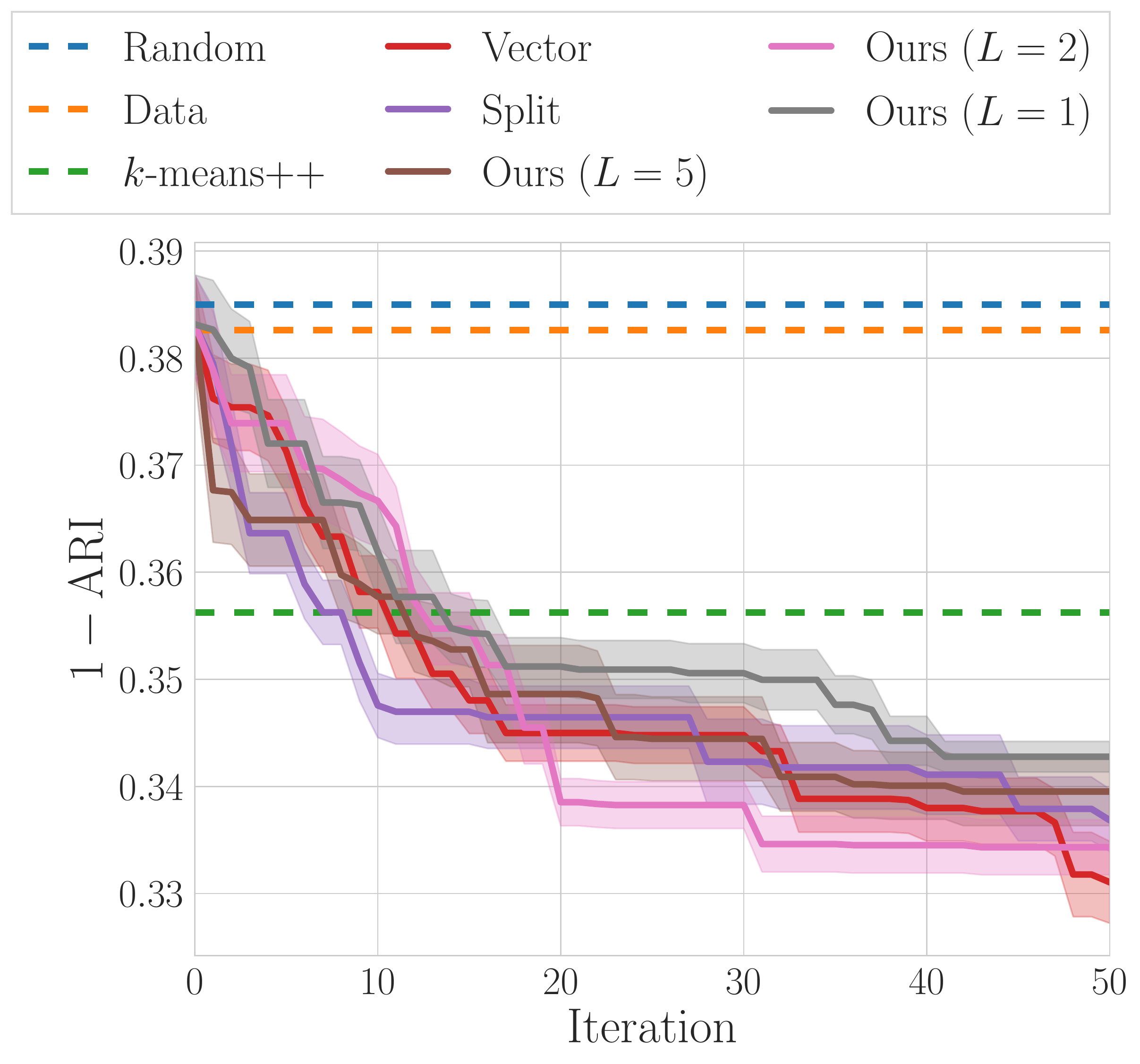}
    		\label{subfig:real_1}
    	}
	\quad\quad\quad
    	\subfigure[NIPS Conference Papers]
    	{
    		\includegraphics[width=0.37\textwidth]{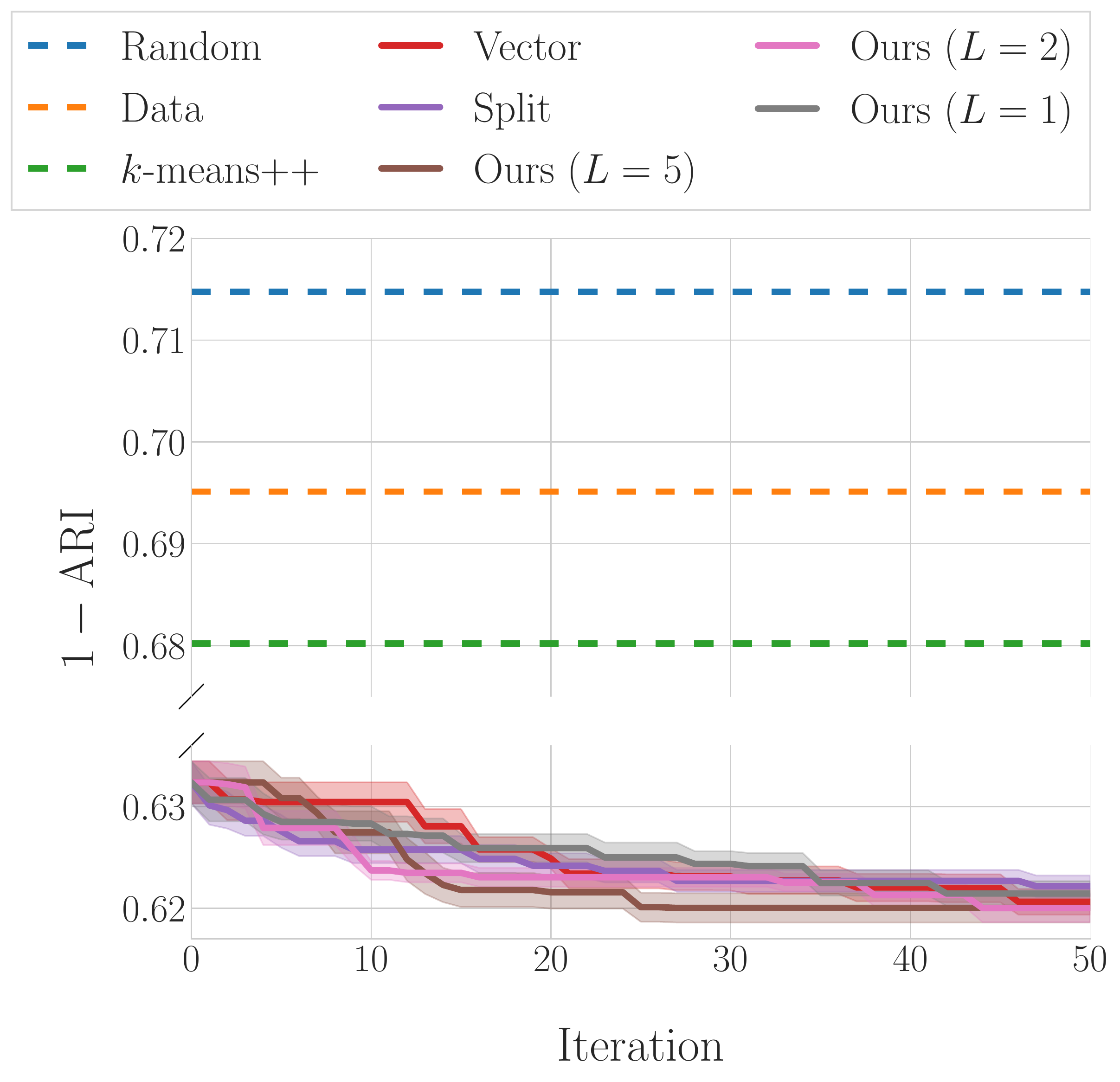}
    		\label{subfig:real_2}
    	}
	\caption{Results on initializing $k$-means clustering for Handwritten Digits and NIPS Conference Papers datasets.\label{fig:kmeans_real}}
\end{figure}

\begin{table}[t]
	\caption{Convergence quality and its execution time on $k$-means clustering and Gaussian mixture model.
	All settings follow the settings in \figref{fig:kmeans_mog}.
	\label{tab:effects_clustering_syn}}
	\vspace{10pt}
	\centering
	\begin{tabular}{ccrcr}
		\toprule
		& \multicolumn{2}{c}{$\boldsymbol k$-\textbf{means clustering}} & \multicolumn{2}{c}{\textbf{Gaussian mixture model}} \\
		$L$ & $1 - $ARI & \multicolumn{1}{c}{Time ($10^3$ sec.)} & $1 - $ARI & \multicolumn{1}{c}{Time ($10^3$ sec.)} \\
		\midrule
		$1$ & $0.051 \pm 0.024$ & $3.140 \pm 0.051$ & $0.202 \pm 0.011$ & $2.364 \pm 0.178$\\
		$2$ & $0.058 \pm 0.030$ & $5.034 \pm 0.145$ & $0.197 \pm 0.038$ & $3.873 \pm 0.148$\\
		$5$ & $0.050 \pm 0.022$ & $13.891 \pm 0.327$ & $0.194 \pm 0.033$ & $13.105 \pm 0.675$\\
		$10 (= m)$ & $0.073 \pm 0.038$ & $45.527 \pm 2.475$ & $0.205 \pm 0.030$ & $49.134 \pm 1.469$ \\
		\bottomrule
	\end{tabular}
\end{table}

\begin{table}[t]
	\caption{Convergence quality and its execution time on $k$-means clustering for Handwritten Digits and NIPS Conference Papers datasets.
	All settings follow the settings in \figref{fig:kmeans_real}.
	\label{tab:effects_clustering_real}}
	\vspace{10pt}
	\centering
	\begin{tabular}{ccrcr}
		\toprule
		& \multicolumn{2}{c}{\textbf{Handwritten Digits}} & \multicolumn{2}{c}{\textbf{NIPS Conference Papers}} \\
		$L$ & $1 - $ARI & \multicolumn{1}{c}{Time ($10^3$ sec.)} & $1 - $ARI & \multicolumn{1}{c}{Time ($10^3$ sec.)} \\
		\midrule
		$1$ & $0.343 \pm 0.007$ & $39.347 \pm \, \ 6.533$ & $0.621 \pm 0.006$ & $4.757 \pm 0.138$\\
		$2$ & $0.334 \pm 0.013$ & $47.550 \pm \, \ 1.832$ & $0.620 \pm 0.007$ & $9.410 \pm 0.260$\\
		$5$ & $0.340 \pm 0.016$ & $177.170 \pm 43.063$ & $0.620 \pm 0.007$ & $38.521 \pm 1.551$\\
		\bottomrule
	\end{tabular}
\end{table}

\noindent \textbf{Synthetic Datasets}
We generate a dataset sampled from Gaussian distributions, where $N = 500$, $d = 5$, and $k = 10$.

\noindent \textbf{Real-World Datasets}
Two real-world datasets are tested:
(i) Handwritten Digits dataset~\cite{DuaD2019uci} and (ii) NIPS Conference Papers dataset~\cite{PerroneV2017jmlr}.
Handwritten Digits dataset contains 0--9 digit images that can be expressed as $N=1797$, $d=64$, and $k=10$.
NIPS Conference Papers dataset is composed of the papers published from 1987 to 2015.
The features of each example are word frequencies,
and this dataset can be expressed as $N=5811$, $d=11463$, and $k=20$.
However, without any techniques for reducing the dimensionality,
this dataset is hard to apply the clustering algorithm.
We choose 200 dimensions in random when creating the dataset for these experiments, 
because producing the exact clusters for entire dimensions is not our interest in this paper.

Because the real-world datasets for clustering are difficult to specify truths,
we determine truths as class labels for Handwritten Digits dataset~\cite{DuaD2019uci} 
and clustering results via Ward hierarchical clustering~\cite{WardJH63jasa} for NIPS Conference Papers dataset~\cite{PerroneV2017jmlr}.

The function of interest in the $k$-means clustering setting
is the \emph{converged} clustering residual
\begin{equation}
    k\textrm{-means}(\{\bx_1, \ldots, \bx_k\}) = \sum_{i = 1}^N \sum_{j = 1}^k
        w_{ij}\|\bp_i - \bc_j\|^2_2,
    \label{eqn:kmeans_obj}
\end{equation}
where
$\{\bx_1, \ldots, \bx_k\}$ is the set of proposed initial cluster centers,
$\{\bc_1, \ldots, \bc_k\}$ is the set of
converged cluster centers~\cite{LloydS1982ieeetit},
and $w_{ij}$ are softmax values from the pairwise distances.
Here, the fact that $\bc_j$ is a function of $\bX$ and $\calP$ 
is omitted for notational simplicity.
The set of converged cluster centers is determined 
through an iterative strategy which is highly dependent
on the initial points $\bX$ to converge to effective centers.

In contrast to $k$-means clustering,
the GMM estimates parameters of Gaussian distributions
and mixing parameters between the distributions.
Because it is difficult to minimize negative
log-likelihood of the observed data, 
we fit the GMM using
expectation-maximization algorithm~\cite{Dempster77jrssB}.
Similarly to $k$-means clustering, this requires
initial guesses $\bX$ to converge to cluster centers
$\{\mathbf{c}_1, \ldots, \mathbf{c}_k\}$.

\tabref{tab:effects_clustering_syn} shows convergence qualities and their execution time on $k$-means clustering algorithm and GMM
for synthetic datasets,
and \tabref{tab:effects_clustering_real} represents the qualities and their execution for Handwritten Digits and NIPS Conference Papers datasets.
Similar to \tabref{tab:kernel_n_d_m_l}, 
the computational cost increases as $L$ increases.

\begin{figure}[t]
	\centering
	\subfigure[100 examples]
	{
		\includegraphics[width=0.37\textwidth]{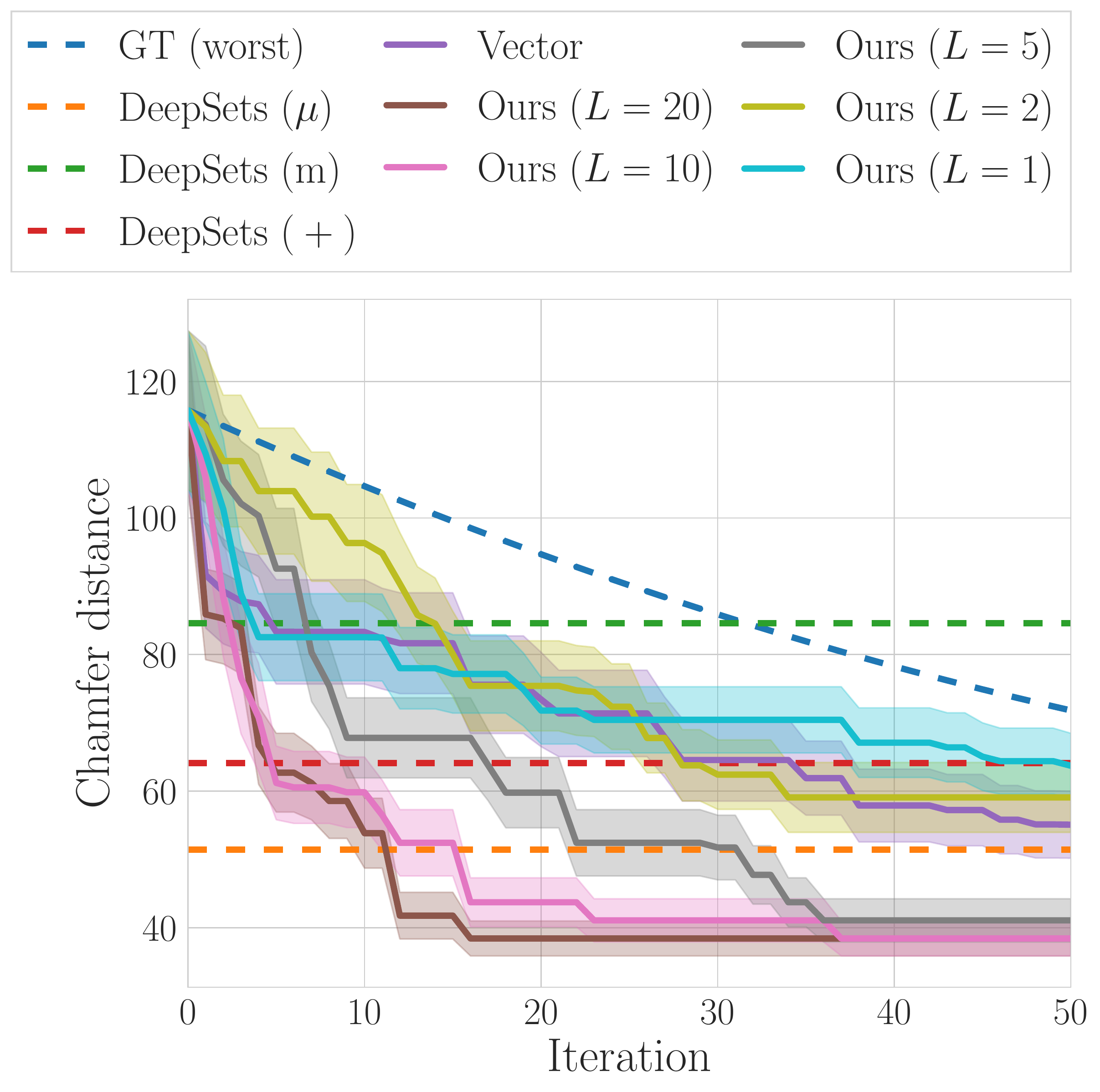}
		\label{fig:pc_100}
	}
	\quad\quad\quad
	\subfigure[Entire examples]
	{
		\includegraphics[width=0.37\textwidth]{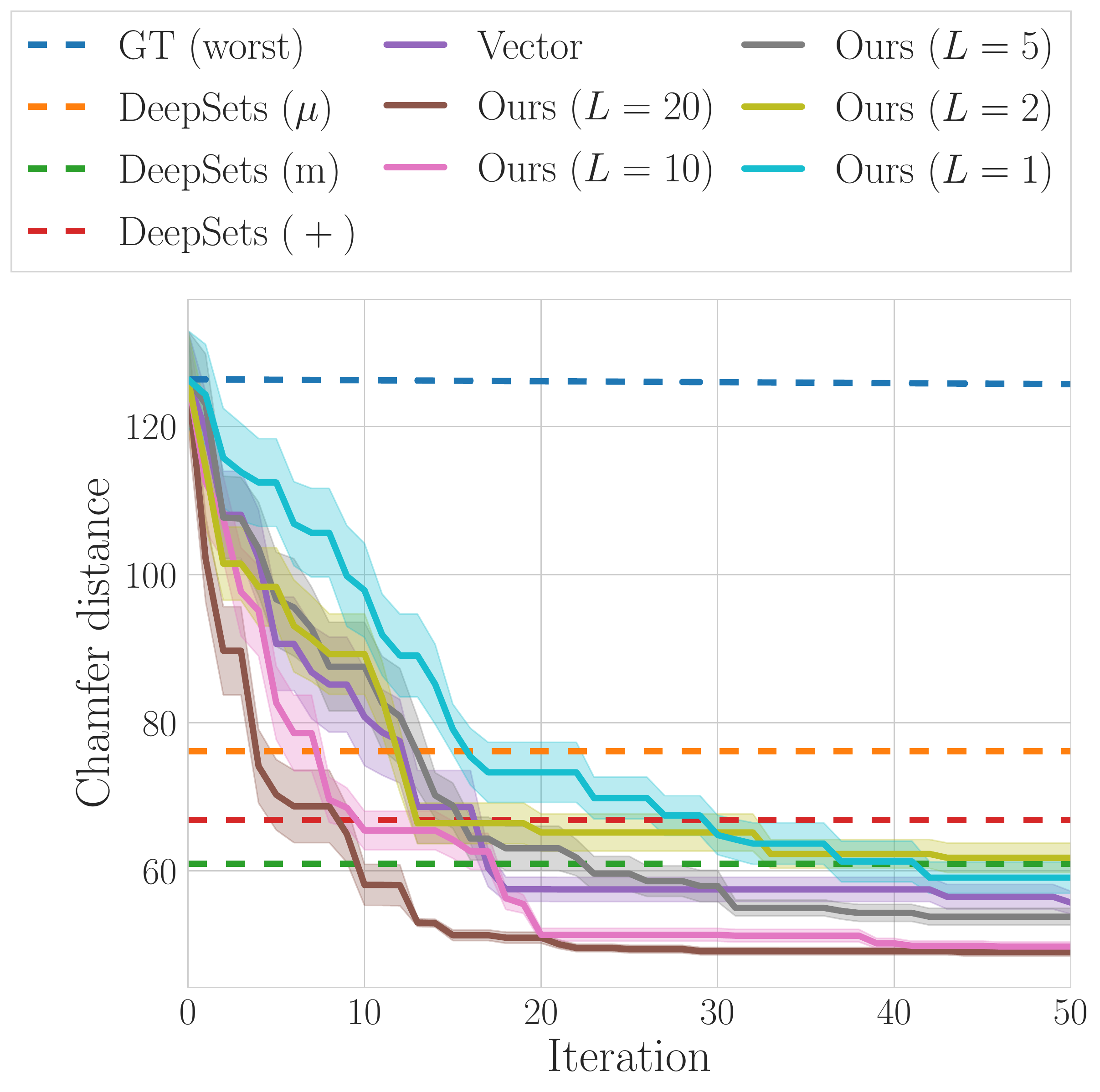}
		\label{fig:pc_12311}
	}
	\\
	\subfigure[Query 1]
	{
		\includegraphics[width=0.14\textwidth]{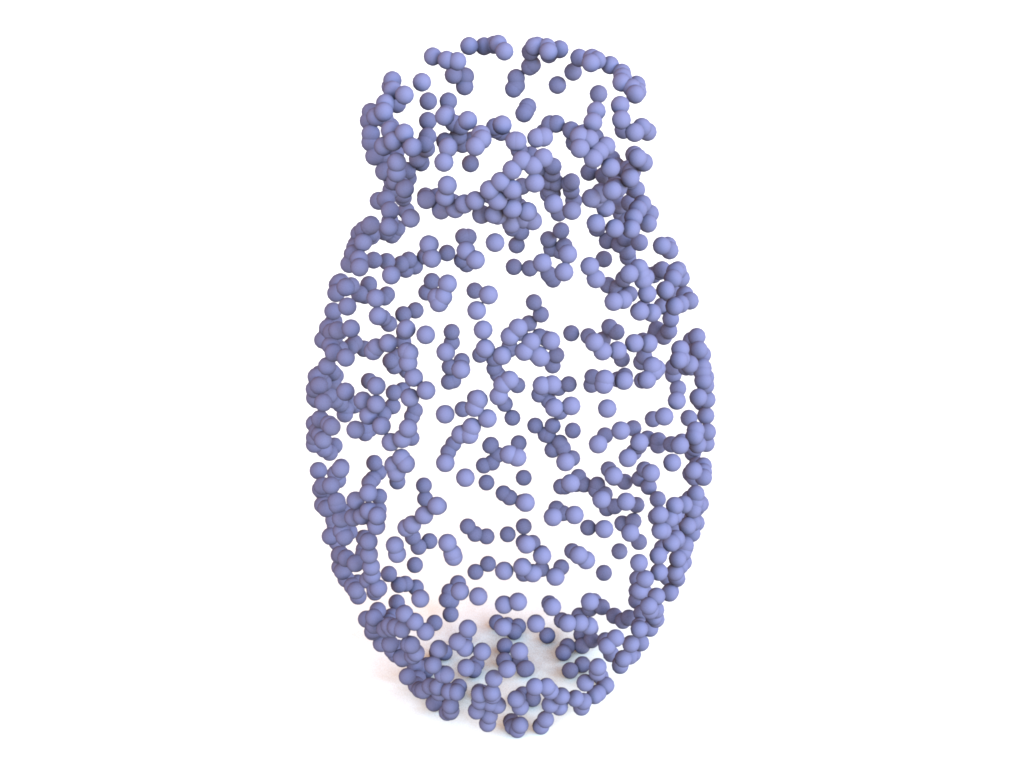}
		\label{fig:pc_target_1}
	}
	\subfigure[NN 1]
	{
		\includegraphics[width=0.14\textwidth]{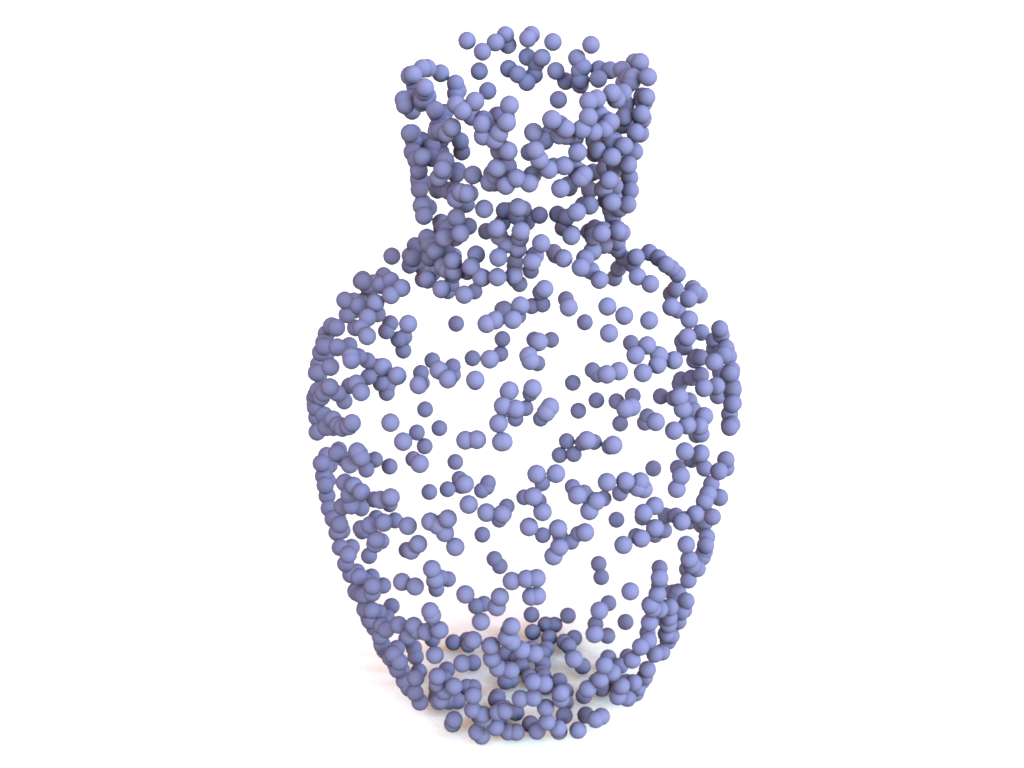}
		\label{fig:pc_nn_1}
	}
	\subfigure[Query 2]
	{
		\includegraphics[width=0.14\textwidth]{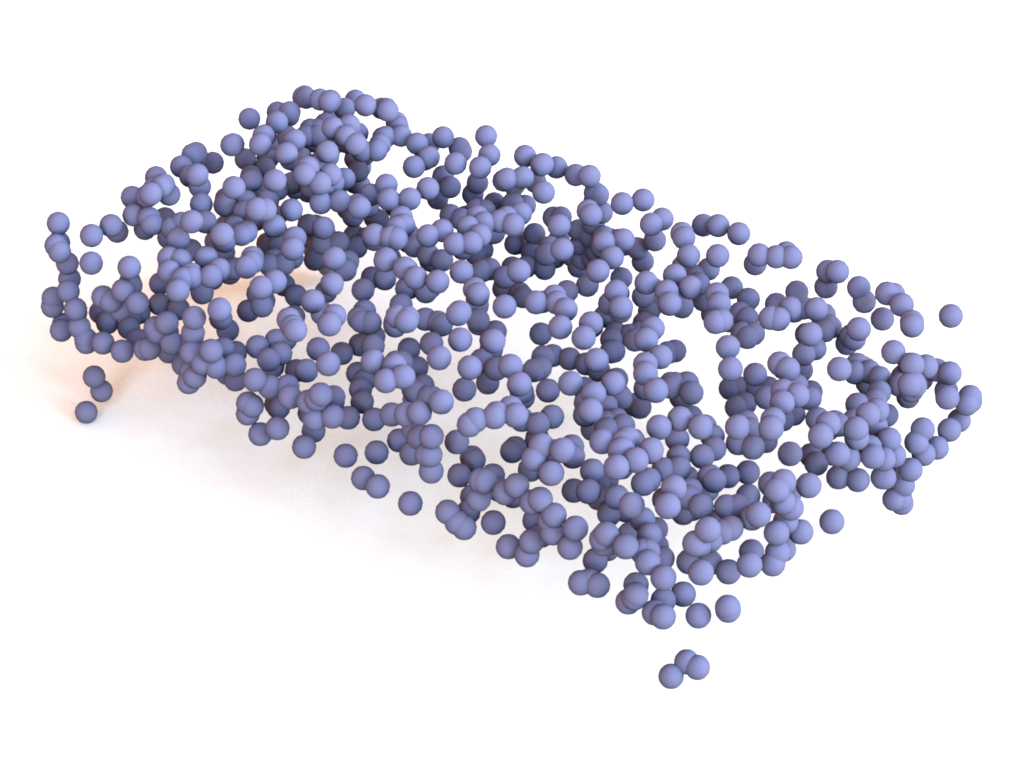}
		\label{fig:pc_target_2}
	}
	\subfigure[NN 2]
	{
		\includegraphics[width=0.14\textwidth]{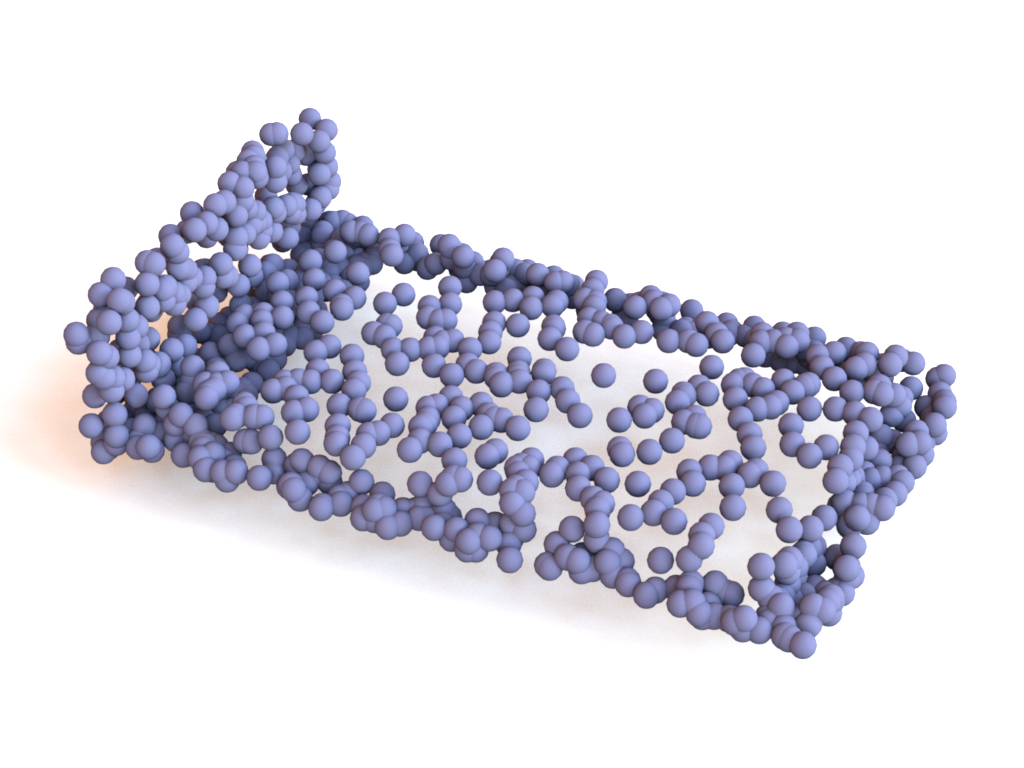}
		\label{fig:pc_best_2}
	}
	\subfigure[Query 3]
	{
		\includegraphics[width=0.14\textwidth]{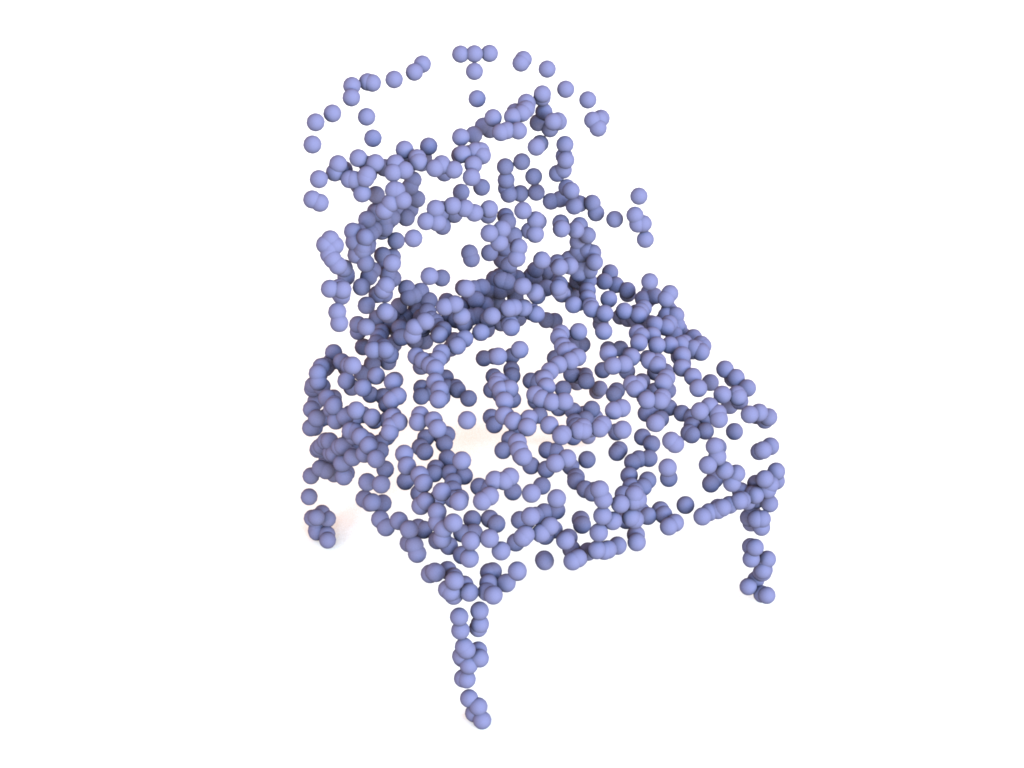}
		\label{fig:pc_target_3}
	}
	\subfigure[NN 3]
	{
		\includegraphics[width=0.14\textwidth]{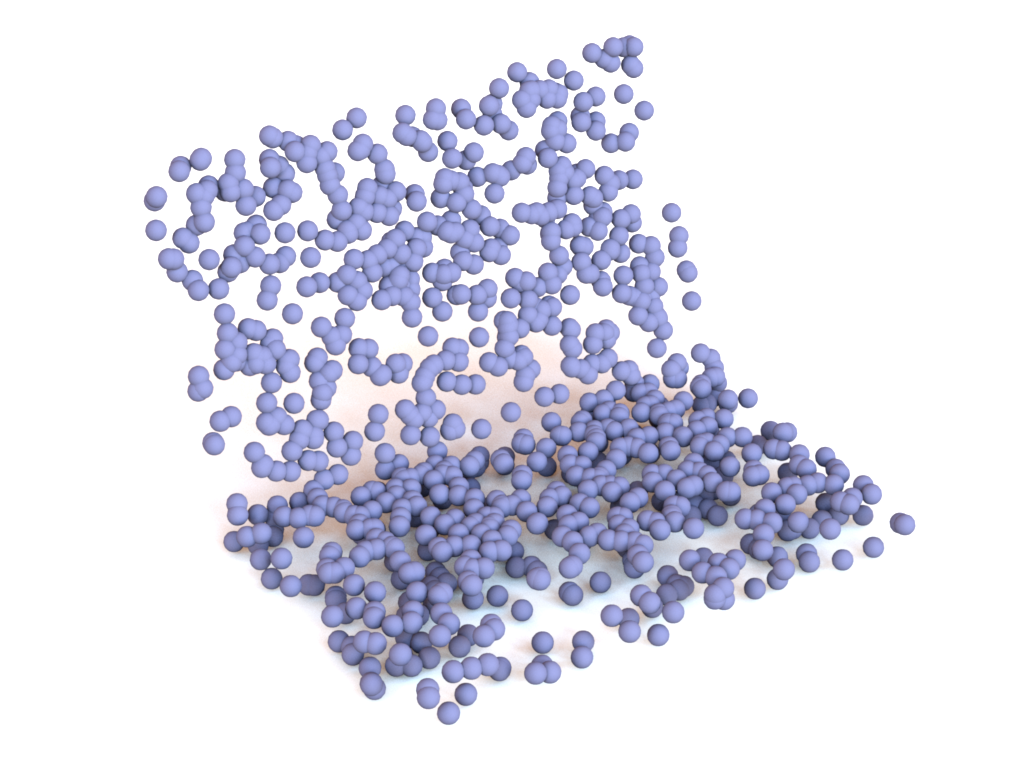}
		\label{fig:pc_nn_3}
	}
	\caption{Nearest neighbor search results on ModelNet40 point clouds.
	GT and DeepSets (m) indicate the ground-truth and DeepSets (max), respectively.
	Query and NN pairs are the query and its nearest neighbor examples found by our method.
	}
	\label{fig:pointcloud_retrieval}
\end{figure}

\begin{table}[t]
	\caption{Convergence quality and its execution time on nearest neighbor retrieval for ModelNet40 point clouds.
	All settings follow the settings in \figref{fig:pointcloud_retrieval}.
	\label{tab:effects_modelnet40}}
	\vspace{10pt}
	\centering
	\begin{tabular}{crrrr}
		\toprule
		& \multicolumn{2}{c}{\textbf{100 examples}} & \multicolumn{2}{c}{\textbf{Entire examples}} \\
		$L$ & \multicolumn{1}{c}{Chamfer distance} & \multicolumn{1}{c}{Time ($10^3$ sec.)} & \multicolumn{1}{c}{Chamfer distance} & \multicolumn{1}{c}{Time ($10^3$ sec.)} \\
		\midrule
		$1$ & $63.720 \pm 23.780$ & $1.133 \pm 0.049$ & $59.095 \pm 10.629$ & $1.071 \pm 0.067$\\
		$2$ & $59.079 \pm 25.610$ & $1.844 \pm 0.109$ & $61.783 \pm 10.000$ & $1.835 \pm 0.094$\\
		$5$ & $41.072 \pm 15.807$ & $6.568 \pm 0.485$ & $53.820 \pm \, \ 5.643$ & $6.456 \pm 0.372$\\
		$10$ & $38.422 \pm 12.807$ & $21.156 \pm 1.360$ & $49.798 \pm \, \ 3.121$ & $23.482 \pm 1.111$\\
		$20$ & $38.422 \pm 12.807$ & $104.330 \pm 5.045$ & $49.054 \pm \, \ 2.614$ & $97.179 \pm 3.882$\\
		\bottomrule
	\end{tabular}
\end{table}

\subsection{Active Nearest Neighbor Search for Point Clouds \label{subsec:exp_pointcloud}}

ModelNet40 dataset~\cite{WuZ2015cvpr} contains 40 categories of 12,311 3D CAD models.
Point cloud representation is obtained by sampling uniformly 1,024 points from the surface of each 3D model using Open3D~\cite{ZhouQ2018arxiv}.
Nearest neighbor search for point clouds requires large number of Chamfer distance calculations, 
which is a time-consuming task, in particular when the size of dataset is large.
We employ our Bayesian optimization over sets to actively select a candidate whose Chamfer distance from the query is to be computed, 
while the linear scan requires the calculation of Chamfer distance from the query to every other data in the dataset:
\begin{equation}
    d_{\textrm{Chamfer}}(\bX, \bY) = \sum_{\bx \in \bX} \min_{\by \in \bY} \| \bx - \by \|_2
\end{equation}
where $|\bX| = |\bY| = m$ and each element in $\bX$ and $\bY$ is a three-dimensional real vector.

For this experiment, we use one oracle and three additional baselines:

\noindent \textbf{Ground-truth (worst)}
It is the worst case to achieve the ground-truth.
The best retrieval case is to find the ground-truth at once,
and the usual case would be between the best and the worst.

\noindent \textbf{DeepSets ($\mu$) / DeepSets (max) / DeepSets ($+$)}
These baselines are implemented to embed point clouds to a single vector 
using DeepSets~\cite{ZaheerM2017neurips},
and measure $l^2$ distance between the embeddings.
Because we can access to class information of ModelNet40 dataset~\cite{WuZ2015cvpr}, 
these neural networks can be trained to match the information.
We use three fully-connected layers with batch normalization~\cite{IoffeS2015icml} as an instance-wise network,
and four fully-connected layers with batch normalization as a network after aggregation.
ReLU is employed as an activation function.
The choice of global aggregation methods determines each baseline:
(i) mean aggregation is $\mu$;
(ii) max aggregation is max;
and (iii) sum aggregation is $+$.
Because retrieving 1-nearest neighbor with these methods is hard to obtain the nearest neighbor,
we choose the nearest neighbor from 3-nearest neighbors to fairly compare with our methods.

Experiments with two different settings were carried out: 
(i) the size of point clouds is 100; 
(ii) full-size point clouds (the cardinality is 12,311).
In the case of DeepSets, point clouds are embedded into a low-dimensional Euclidean space, so that Euclidean distance is used to search a nearest neighbor (i.e., approximate search). On the other hand, our method actively selects a candidate gradually in the point cloud dataset.
The nearest neighbor determined by our method, given the query, 
is shown in \figref{fig:pointcloud_retrieval}.
About 20 iterations of the procedure is required to achieve the better performance, 
compared to DeepSets (\figref{fig:pc_100} and \figref{fig:pc_12311}).

\subsection{Empirical Analysis on Computational Cost\label{subsec:discussion}}

The computational costs from \tabref{tab:kernel_n_d_m_l} to \tabref{tab:effects_modelnet40} 
are presented as a function of $L$.
These results are measured using a native implementation of set kernels written in \texttt{Python}.
As mentioned in \secref{sec:method}, 
the computational costs follow our expectation, which implies
that the complexity for computing a covariance matrix over sets 
is the major computations in the overall procedure.

\section{Conclusion\label{sec:conclusion}}

In this paper, we propose the Bayesian optimization method over sets, 
which takes a set as an input and produces a scalar output.
Our method based on GP regression models a surrogate function using set-taking covariance functions, 
referred to as set kernel.
We approximate the set kernel to the efficient positive-definite kernel 
that is an unbiased estimator of the original set kernel.
To find a next set to observe, we employ a constrained acquisition function optimization 
using the symmetry of the feasible region defined over sets.
Moreover, we provide a simple analysis on cumulative regret bounds of our methods.
Our experimental results demonstrate our method can be used in some novel applications for Bayesian optimization.


%
\section*{Conflict of interest}

The authors declare that they have no conflict of interest.

\bibliographystyle{spmpsci}
\bibliography{sjc}

\end{document}